\newcommand{\RR}{\mathbb{R}}
\newcounter{commentcounter}
\long\def\symbolfootnote[#1]#2{\begingroup%
\def\thefootnote{\fnsymbol{footnote}}\footnote[#1]{#2}\endgroup}
\newif\iftextinsteadofmath
\newcommand{\defv}[2]{
\iftextinsteadofmath
    \expandafter\newcommand\csname#1\endcsname{{\sf #1}}
\else
    \expandafter\newcommand\csname#1\endcsname{#2}
\fi
}
\newcommand{\redefv}[2]{
\iftextinsteadofmath
    \expandafter\renewcommand\csname#1\endcsname{{\sf #1}}
\else
    \expandafter\renewcommand\csname#1\endcsname{#2}
\fi
}
\newcommand{\btab}{\begin{tabbing}}
\newcommand{\etab}{\end{tabbing}}
\newcommand{\ben}{\begin{enumerate}}
\newcommand{\een}{\end{enumerate}}
\newcommand{\be}{\begin{equation}}
\newcommand{\ee}{\end{equation}}
\newcommand{\bea}{\begin{eqnarray}}
\newcommand{\eea}{\end{eqnarray}}
\newcommand{\bean}{\begin{eqnarray*}}
\newcommand{\eean}{\end{eqnarray*}}
\newcommand{\beal}[1]{\begin{alignat}{#1}}
\newcommand{\eeal}{\end{alignat}}
\newcommand{\bealn}[1]{\begin{alignat*}{#1}}
\newcommand{\eealn}{\end{alignat}}
\newcommand{\bd}{\begin{description}}
\newcommand{\ed}{\end{description}}
\newcommand{\bi}{\begin{itemize}}
\newcommand{\ei}{\end{itemize}}
\newcommand{\bc}{\begin{center}}
\newcommand{\ec}{\end{center}}
\newcommand{\bq}{\begin{quotation}}
\newcommand{\eq}{\end{quotation}}
\newcommand{\bco}{\begin{comment}}
\newcommand{\eco}{\end{comment}}
\newcommand{\bmat}{\left(\begin{matrix}}
\newcommand{\emat}{\end{matrix}\right)}
\newcommand{\ba}[1]{\begin{array}{#1}}
\newcommand{\ea}{\end{array}}
\newcommand{\bfig}{\begin{figure}[htb] \rule{\linewidth}{0.1mm}\\ }
\newcommand{\efig}{ \rule{\linewidth}{0.1mm} \end{figure}}
\newcommand{\bmp}[1]{\begin{minipage}{#1}}
\newcommand{\bmpp}[2]{\begin{minipage}[#1]{#2}}
\newcommand{\emp}{\end{minipage}}
\newcommand{\norm}[1]{{\lVert#1 \rVert}}
\theoremstyle{plain}
\newtheorem{thm}{Theorem}[section]
\newtheorem{asmp}[thm]{Assumption}
\newtheorem{lem}[thm]{Lemma}
\newtheorem{prop}[thm]{Proposition}
\newtheorem{defn}{Definition}[section]
\newtheorem{rem}{Remark}[section]
\title{Differentially Private Accelerated Optimization Algorithms \thanks{Nurdan Kuru is the first author; the other authors contributed equally. \newline Email: \url{nurdankuru@sabanciuniv.edu}, \url{birbil@ese.eur.nl}, \url{mg1366@rutgers.edu}, \url{sinanyildirim@sabanciuniv.edu}}}
\author[$\dagger$]{Nurdan Kuru}
\author[$\star$]{Ş. İlker Birbil}
\author[$\ddag$]{Mert Gürbüzbalaban}
\author[$\dagger$]{Sinan Yıldırım}
\affil[$\dagger$]{Faculty of Engineering and Natural Sciences, Sabancı University}
\affil[$\star$]{Econometric Institute, Erasmus University, Rotterdam}
\affil[$\ddag$]{Department of Management Science and Information Systems, Rutgers University.}
\date{\today}
\begin{document}
\renewcommand\footnotemark{}
\maketitle

% REQUIRED
\begin{abstract}
 We present two classes of differentially private optimization algorithms derived from the well-known accelerated first-order methods. The first algorithm is inspired by Polyak's heavy ball method and employs a smoothing approach to decrease the accumulated noise on the gradient steps required for differential privacy. The second class of algorithms are based on Nesterov's accelerated gradient method and its recent multi-stage variant. We propose a noise dividing mechanism for the iterations of Nesterov's method in order to improve the error behavior of the algorithm. The convergence rate analyses are provided for both the heavy ball and the Nesterov's accelerated gradient method with the help of the dynamical system analysis techniques. Finally, we conclude with our numerical experiments showing that the presented algorithms have advantages over the well-known differentially private algorithms.
\end{abstract}

\section{Introduction} \label{sec: Introduction}

In many real applications involving data analysis, the data owners and the data analyst may be different parties. In such cases, privacy of the data could be a major concern. Differential privacy promises securing an individual's data while still revealing useful information about a population \cite{Dwork_2006}. It is based on constructing a mechanism, for which output stays probabilistically similar whenever a new item is added or an existing one is removed from the data set. Such incremental mechanisms have been shown to ensure data privacy \cite{Dwork_2008}. 
% Clearly, privacy is not the utmost interest when the data owner and the data analyst are the same.
Differential privacy is used within various types of methods in machine learning, such as; boosting, linear and logistic regression and support vector machines \cite{Dwork_etal_2010, Chaudhuri_Monteleoni_2009, rubinstein2009learning, zhang2012functional}.

In this work, we consider the scenario where a data analyst performs analysis on a dataset owned by another party by means of solving an optimization problem with (stochastic) first-order methods for empirical risk minimization. There is in fact a large body of work on differentially private empirical risk minimization \cite{chaudhuri2011differentially, kifer2012private, Bassily_etal_2014, Zhang_etal_2017}. We will specifically focus on privacy preserving gradient-based iterative algorithms, which are a popular choice for large-scale problems due to their scalability properties \cite{abadi2016,  shokri2015privacy, yu2019differentially}. Our contributions specifically regard two gradient-based stochastic accelerated algorithms, Polyak's heavy ball (HB) algorithm \cite{polyak1964some}, and Nesterov's accelerated gradient (NAG) algorithm \cite{nesterov1983} as well as a recent variant of NAG \cite{aybat2019masg}.

Differential privacy can be achieved by adding carefully adjusted random noise to the input (data) such as in \cite{Foulds_et_al_2016}, to the output (some function of data), such as in \cite{Chaudhuri_Monteleoni_2009}, or to the iteration vectors of an iterative algorithm, such as in \cite{abadi2016, pichapati2019adaclip}. In this paper, we focus on the latter case in connection with gradient-based algorithms, where the iteration vectors of a gradient-based algorithm are revealed at the intermediate steps. This scenario is particularly relevant, for example, when some assessment should done publicly on the convergence of the algorithms, or when the available data are shared among multiple users. Although the intrinsic randomness in a stochastic gradient descent algorithm has been shown to provide some level of privacy in a recent study \cite{hyland2019intrinsic}, the authors report high levels of privacy loss for most datasets.
%  (e.g. $\epsilon = 13.01$). 
That is why most of the studies in the literature consider adding a suitable noise vector to the gradient at each step. However, this noise does harm the performance of the algorithm in such a way that it may even cause divergence. Therefore, the utility of a privacy preserving algorithm is always a concern, as in our work. 

There is a large amount of work for improving the utility of gradient based algorithms while preserving a given amount of privacy `budget' (a mathematical definition of this budget is given in Section \ref{sec: Preliminaries}).  A well known computational tool is, for example, subsampling, which is analyzed in a broader context in \cite{Bassily_etal_2014}. Norm clipping, that is, bounding the norm of the gradient according to a threshold, is also used to control the amount of noise; see for instance \cite{abadi2016, pichapati2019adaclip, song2020characterizing}. Analytical developments are also present: The authoros of \cite{abadi2016} focus on tracking higher moments of the privacy loss to obtain tighter estimates on the privacy loss. Other forms of differential privacy are also employed to conduct tighter analysis of the privacy loss \cite{Dwork_and_Rothblum_2016, Bun_and_Steinke_2016, Mironov_2017, Zhang_etal_2017}. 

\smallskip
% \paragraph{\textbf{Contributions}} \label{sec: Contribution}
\textbf{Contributions:} In this paper, we contribute to the existing literature on privacy preserving gradient-based algorithms by proposing, and providing a theoretical analysis of, differentially private versions of HB and NAG. 
% Those algorithms are well known for improved convergence properties over the basic gradient descent algorithm. We propose to use their stochastic versions in the differential privacy setting, and demonstrate their advantages with theoretical and numerical findings. 
%In the following, we will explain our contributions for HB and NAG separately.

% A related question is how to distribute the privacy noise 
Our first algorithm is a variant of HB, which employs a smoothing approach by the help of the information from the previous iterations. We use this mechanism to improve the privacy level by taking the weighted average of the current and the previous noisy gradients. We give a convergence rate analysis using the dynamical system analysis techniques for optimization algorithms \cite{lessard2016analysis,  hu2017dissipativity, fazlyab2018analysis}. Although this kind of analysis exists for the deterministic HB method \cite{hu2017dissipativity}, to the best of our knowledge, the case with noisy gradients has not been considered in the literature, except in \cite{can2019accelerated}, where a special case of quadratic objectives is studied for a particular choice of the stepsize and the momentum parameter (corresponding to the traditional choice of parameters in deterministic HB methods). By extending on \cite[Theorem 12]{can2019accelerated}, we give general results in terms of the error bounds for any selection of stepsize and momentum parameters.

The main motivation behind our error analysis is to shed light on the effect of the free parameters in the algorithm, such as the stepsize and the momentum parameters, and the number of iterations, on the performance. In the typical stochastic optimization setting, the noise in the gradients is assumed to have a bounded variance which does not depend on the number of iterations, therefore the performance bounds obtained for the accuracy of momentum-based algorithms such as NAG or HB (measured in terms of expected suboptimality of the iterates) with constant parameters can improve monotonically as the number of iterations is increased (see e.g. \cite{lan2020first,can2019accelerated,aybat2019masg}). However, this is not necessarily the case in privacy preserving versions of these algorithms.
% \sout{Unlike in a normal setting, the accuracy performances of iterative privacy preserving algorithms do not monotonically improve with an increase in the number of iterations}.
% What happens typically is an improvement in the accuracy up to a certain number of iterations, which is followed by a degrading in the accuracy as the number of iterations increases. 
This is because each iteration causes some privacy loss and the amount of noise in the gradients has to be increased as the total number of iterations increases. Likewise, it is not clear how to set the stepsize and the momentum parameter for optimum performance of a privacy preserving version of an algorithm because of the complex trade-off between the convergence rate and additive error due to noise. We address such issues for the differentially private HB algorithm by providing performance bounds and error rates
%\comment{Error rate derken suboptimality mi demek istiyoruz?} 
in terms of the number of iterations and the momentum parameters. We extend the existing results from the literature \cite{hu2017dissipativity, can2019accelerated} to provide an analysis for general stepsize and momentum parameter choices for both quadratic objectives as well as for smooth strongly convex objectives for the HB method under noisy gradients. In particular, tuning the stepsize and the momentum parameters to the level of desired privacy level allows us to achieve better accuracy in the privacy setting compared to traditional choice of parameters previously used for the deterministic HB method.

Our second contribution regards differentially private versions of NAG \cite{nesterov1983}. NAG can simply be made differentially private by merely adding noise to the its gradient calculations. However, how to distribute the privacy preserving noise to iterations to have an optimal performance has not been concretely addressed in the literature. This question can be reformulated as \emph{how to distribute a given, fixed, privacy budget to the iterations of the algorithm}. The relevance of this question is due to the fact that in each iteration a noisy gradient is revealed, causing privacy loss. We address this problem for the differentially private versions of NAG. In doing so, we exploit some explicit bounds in \cite{aybat2019masg} on the expected error of those algorithms when they are used with noisy gradients. Our findings show that distributing privacy budget to iterations uniformly, which corresponds to using the same variance for the privacy preserving noise for all iterations, is not the optimal way in terms of accuracy.

We also consider a differentially private version of a recent variant of NAG, the multi-stage accelerated stochastic gradient (MASG), introduced in \cite{aybat2019masg} to improve error behavior. The method is tailored to deal with noisy gradients in NAG, hence is quite relevant to our setting in which noise is used to help with preserving privacy. However, the authors have not considered differential privacy while designing their algorithm. Similar techniques to NAG will be used for the error analysis of the differentially private version of MASG. Moreover, our novel scheme of optimally distributing the privacy budget to the iterations can also be applied to MASG in a similar manner.

We would like to mention the techniques for, and the scope of, the analysis of our proposed algorithms. By their nature, the proposed algorithms are stochastic, where the gradient vector is augmented with privacy preserving noise at each iteration. There exist several studies that analyze the convergence of stochastic accelerated algorithms; for instance, see \cite{loizou2017momentum, gadat2018stochastic, ramezani2018stability} for works related to stochastic HB, and \cite{Yang_etal_2016, yan2018unified, Mohammadi_et_al_2020} for a unified analysis of stochastic versions of GD, NAG and HB methods. We adopt a dynamical system representation approach that is preferred to analyze the first order optimization algorithms \cite{lessard2016analysis, fazlyab2018analysis, hu2017dissipativity, aybat2018robust, aybat2019masg,Jovanovic, Mohammadi_et_al_2020}. In this approach, the convergence rate is found with respect to the rate of decrease of a Lyapunov function of the system state of the dynamic system induced by the algorithm.

Finally, we remark that the given results are satisfied even when the noise that corrupts the gradient is \textit{uncorrelated} with the state of the algorithm, provided that the noise variance can be bounded. The case of uncorrelated noise is evidently more general than the case of independent noise. In our setting, uncorrelatedness of the noise in the gradient is ensured by the noise being  zero mean with a bounded variance conditioned on the state of the algorithm. Such characteristics of the gradient noise is quite relevant to differential privacy for two reasons: First, subsampling is a common technique used in privacy preserving algorithms, and the error due to subsampling has zero mean and its variance is typically dependent on the current iterate of the algorithm. Second, the variance of the privacy preserving noise is adjusted by a so-called sensitivity function of the state of the algorithm, which may be state dependent.

\section{Preliminaries} \label{sec: Preliminaries}

A vast variety of problems in machine learning can be written as unconstrained optimization problems of the form 
\begin{equation} \label{eq: main_prob}
\min_{x \in \mathbb{R}^{d}} F(x),
\end{equation}
where $x \in \mathbb{R}^{d}$ is a parameter vector of dimension $d \geq 1$. This papers concerns a data-oriented optimization problem, where the objective function depends on a given dataset $Y = \{ y_{1}, \ldots, y_{n} \} \subseteq \mathcal{Y}$. The objective function in \eqref{eq: main_prob} is a sum of functions that correspond to contributions of the individual data points $y_{1}, \ldots, y_{n}$ to the global objective. More specifically, we are interested in objective functions of the form
\begin{equation}
F(x) = \frac{1}{n} \sum_{i  = 1}^{n} f(x; y_{i}),
\label{eq-finite-sum}
\end{equation}
where $f(\cdot; y): \mathbb{R}^{d} \mapsto \mathbb{R}$ for $y \in \mathcal{Y}$. These problems arise in empirical risk minimization in the context of supervised learning \cite{vapnik2013nature}. Note that one could write $F(x; Y)$ in order to emphasize the dependency of $F$ on $Y$. However, for the sake of simplicity, we suppress $Y$ in the notation. In this paper, we further restrict our attention to the set of strongly convex and smooth (that is with a Lipschitz continuous gradient) functions; see Definition \ref{defn: st_convex} in Appendix \ref{app:0}.

Gradient-based methods are arguably the most popular methods for the optimization problem in \eqref{eq: main_prob}. We define the gradient vectors for the additive functions
\[
\nabla f(x; y) = \left( \frac{\partial f(x; y)}{\partial x_{1}}, \ldots, \frac{\partial f(x; y)}{\partial x_{d}} \right)^{\top}, \quad x \in \mathbb{R}, \quad y \in \mathcal{Y},
\]
so that the \emph{(full)} gradient $\nabla F(x)$ is given by
\begin{equation} \label{eq: full gradient}
\nabla F(x) = \frac{1}{n}\sum_{i=1}^{n} \nabla f(x; y_{i}).
\end{equation}
The iterates of the basic gradient descent method for the solution of \eqref{eq: main_prob} is given by
\begin{equation} \label{eq: GD basic}
x_{t+1} = x_{t} - \alpha \nabla F(x_{t}), \quad t \geq 1,
\end{equation}
where $\alpha$ is the (constant) learning rate. There are two well-known modifications of the basic gradient descent; Polyak's heavy ball (HB) method \cite{polyak1964some} and Nesterov's accelerated gradient (NAG) method \cite{nesterov1983}. Both introduce a momentum parameter $\beta\geq 0$ to improve upon the convergence of gradient descent. The update rule for HB at iteration $t$ is given by
\begin{equation}
\label{eq: HB Basic}
x_{t+1} = x_{t} - \alpha \nabla F(x_{t}) + \beta (x_{t}-x_{t-1}),
\end{equation}
whereas the update rule for NAG at iteration $t$ is simply
\begin{align}\label{eq: NAG Basic}
\begin{split}
& x_{t+1}  = z_{t} - \alpha \nabla F(z_{t}), \\
& z_{t} = (1+\beta) x_{t} - \beta x_{t-1}.
\end{split}
\end{align}
There exist stochastic versions of these gradient-based methods that are employed when either the gradients are noisy or an exact calculation per iteration is too expensive. In the former case, $\nabla F(x_{t})$ is simply replaced by the noisy gradient, provided that the noisy gradient is an unbiased estimator of the true gradient. In the second case, the computationally costly $\nabla F(x_{t})$ is replaced by a mini-batch estimator 
\begin{equation} \label{eq: gradient of the mini-batch}
\nabla F_{B_t}(x) := \frac{1}{m}\sum_{i \in B_t} \nabla f(x; y_{i}),
\end{equation}
where $B_t$ is a subset $B \subseteq \{1, \dots, n\}$ with $|B_t| = m$, formed by sampling without replacement so that $\nabla F_{B_t}(x)$ is unbiased.

In our subsequent discussion, we will modify the steps of the gradient-based methods to have privacy-preserving updates. Our setting is as follows: The data holder makes public the iterates
$\{ x_{t} \}_{0 \leq t \leq T}$ for a total of $T$ iterations. The algorithm is known with all its parameters $\alpha$ (and $\beta$). If the data holder applies the related update of the method directly, the vectors $ \nabla F(x_{t})$ are revealed. This violates privacy since the revealed terms are deterministic functions of the data. Therefore, due to privacy concerns, the iterates have to be randomized by using a noisy gradient.

Differential privacy quantifies the privacy level that one guarantees by such randomizations. A randomized algorithm takes an input dataset $Y \in \mathcal{Y}$ and returns the random output $A_{Y} \in \mathcal{X}$. Such an algorithm can be associated with a function $\mathcal{A}: \mathcal{Y} \rightarrow \mathcal{P}$ that maps a dataset from $\mathcal{Y}$ to a probability distribution $\mathcal{A}(Y) \in \mathcal{P}$ such that the output is random with $A_{Y} \sim \mathcal{A}(Y)$. For datasets $Y_{1}$ and $Y_{2}$, let $h(Y_{1}, Y_{2})$ denote the Hamming distance between $Y_{1}$ and $Y_{2}$. This distance indicates the number of different elements between the two datasets. A differentially private algorithm ensures that $\mathcal{A}(Y_{1})$ and $\mathcal{A}(Y_{2})$ are ``not much different,''  if $h(Y_{1}, Y_{2}) = 1$. This statement is formally expressed by \cite{Dwork_2008} (see Definition \ref{defn:DP} in Appendix \ref{app:0}).

Most existing differentially private methods perturb certain functions of data with a suitably chosen random noise. The amount of this noise is related to the sensitivity of the function, which is the maximum amount of change in the function when one single entity of the data is changed (see Definition \ref{defn:Sensitivity} in Appendix \ref{app:0}). There are many results proposed in the literature that provide differential privacy for iterative algorithms. Among those results, we will mainly use three of them concerning Laplace mechanism, composition and subsampling. For ease of reference, corresponding three theorems are also given in Appendix \ref{app:0}.

When privacy is of concern for the optimization problem \eqref{eq: main_prob}, one approach is to update the parameter $x_{t}$ of iteration $t$ using a noisy (stochastic) gradient vector 
\begin{equation} \label{eq: noisy gradient vector}
\widetilde{\nabla F_{B_{t}}}(x_{t}) = \nabla F_{B_{t}}(x_{t}) + \eta_{t},
\end{equation}
where $B_{t}$ is the indices of full (sampled) data with size $m$ and $\eta_{t} = \left(\eta_{t, 1}, \ldots, \eta_{t, d} \right)^{\top}$ is a vector of independent noise terms having Laplace distribution with its parameter value chosen suitably to provide the desired level of privacy. Although the privacy of an algorithm can be guaranteed in this way, the performance will be affected because of the noise added at each iteration. In this paper, we analyze the present trade-offs between accuracy and privacy in gradient based algorithms, and propose accelerated algorithms with good performance under the differential privacy noise.

\section{Differentially Private Heavy Ball Algorithm} \label{sec: Differentially Private Heavy Ball Algorithm}

We start with investigating a differentially private version of the stochastic HB algorithm, which we will abbreviate as DP-SHB. The update rule of this algorithm operates on a dataset of size $n$ with steps 
\begin{equation} \label{eq: DP-SHB update rule}
x_{t+1} = x_{t} - \alpha (\nabla F_{B_{t}}(x_{t}) + \eta_{t}) + \beta (x_{t}-x_{t-1}),
\end{equation}
where $0 < \beta < 1$ is the momentum parameter of HB, $B_{t}$'s are i.i.d.\ random subsamples of size $m \leq n$ sampled without replacement, and $\eta_{t}$'s are independent random vectors having i.i.d.\ noise components with $\textup{Laplace}(b_{t}(x_{t}))$ which is the Laplace distribution with a zero mean and variance $2b_{t}(x_{t})^2$. Here, differential privacy of \eqref{eq: DP-SHB update rule} is sought through the noisy gradient $\nabla F_{B_{t}}(x_{t}) + \eta_{t}$. The minimum value, required for the parameter $b_{t}(x_{t})$ of the Laplace distribution to have $\epsilon$-differential privacy, depends on the number of iterations $T$, the subsample size $m$, and the $L_{1}$ sensitivity $S_{1}(x_{t})$ at $x_{t}$, where the $L_{1}$ sensitivity function is defined as
\begin{equation}\label{eq: L1 sensitivity of f}
S_{1}(x) = \sup_{y, y' \in  \mathcal{Y}} | \nabla f(x; y) - \nabla f(x; y') |, \quad  x \in \RR^{d}.
\end{equation}
Observing \eqref{eq: gradient of the mini-batch}, we see that changing $Y$ and $Y'$ in one data item corresponds to the existence of a single pair of different values $(y_{i}, y'_{i})$. Hence, the change in $F(x)$ is by at most $S_{1}(x)/n$.

Consider the DP-SHB algorithm, where at iteration $t$, we draw a subsample of size $m$ from a dataset of size $n$, and add Laplacian noise with parameter $b_{t}(x_{t})$ to the mini-batch estimator in \eqref{eq: noisy gradient vector}. Then, using the result regarding the Laplace mechanism in Theorem \ref{thm: Laplace mechanism}, and the privacy amplification result stated in Theorem \ref{thm: subsampling_last}, the privacy leak at the iteration can be shown to be
\[
\epsilon_{t} = \varepsilon(S_{1}(x_{t}), b_{t}(x_{t}), n, m),
\]
where the function $\varepsilon: [0, \infty)^{2} \times \{ (m, n) \in \mathbb{Z}_{+}: m \leq n \} \mapsto \RR$ is given as
\begin{equation} \label{eq: epsilon under subsampling}
\varepsilon(S, b, n, m) := \ln \left[ (e^{S/(bm)} - 1)\frac{m}{n} + 1 \right], \quad  \text{for} \quad S, b \in [0, \infty)^{2}; m \leq n \in \mathbb{Z}_{+}.
\end{equation}
Note that, for $m = n$, i.e., under no subsampling, we end up with $\varepsilon(S, b, n, n) = S/(bn)$. The following proposition uses this fact and states the required amount of noise variance in order to have an $\epsilon$-differentially private algorithm after $T$ iterations.
\begin{prop} \label{prop: algstocDP} The DP-SHB algorithm in \eqref{eq: DP-SHB update rule} leads to an $\epsilon$ differentially private algorithm if the parameter $b_{t}(x_{t})$ of the Laplace distribution $\textup{Laplace}(b_{t}(x_{t}))$ for each component of the noise vector $\eta_{t}$ at iteration $t$ is chosen as
\begin{align} \label{eq: required stoc sigma} b_{t}(x_{t}) = \frac{S_{1}(x_{t})}{m \epsilon_{0}},
\end{align}
where $x_{t}$ is the output value at iteration $t$, $n$ is the number of data points,
\begin{equation} \label{eq: epsilon per iteration}
\epsilon_{0} = \ln \left[ 1 + ( e^{\epsilon/T} - 1 ) n/m \right],
\end{equation}
$m$ is the subsample size, and $T$ is the maximum number of iterations.
\end{prop}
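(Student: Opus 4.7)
The plan is to combine three ingredients already stated in Appendix \ref{app:0}: the Laplace mechanism (Theorem \ref{thm: Laplace mechanism}), the privacy amplification by subsampling (Theorem \ref{thm: subsampling_last}), and $T$-fold composition of differentially private mechanisms. The overall idea is to work iteration-by-iteration: fix $t$, treat $x_t$ as already released by previous rounds, bound the per-iteration privacy loss by $\epsilon/T$, and then compose.

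First I would compute the $L_{1}$ sensitivity of the mini-batch estimator $\nabla F_{B_{t}}(x)$ in \eqref{eq: gradient of the mini-batch} at the point $x = x_t$. Changing $Y$ to a neighbouring dataset $Y'$ (differing in exactly one record, say index $i$) alters the sum $\sum_{i \in B_t} \nabla f(x_t; y_i)$ in at most one summand, so the sensitivity of $\nabla F_{B_{t}}(x_t)$ is bounded by $S_{1}(x_t)/m$, conditionally on $x_t$ and $B_t$. With $b_{t}(x_{t}) = S_{1}(x_{t})/(m\epsilon_{0})$, the componentwise Laplace noise $\eta_t$ then satisfies the hypothesis of Theorem \ref{thm: Laplace mechanism}, so that the release of $\nabla F_{B_{t}}(x_{t}) + \eta_{t}$ is $\epsilon_{0}$-DP with respect to the subsampled dataset.

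Next I would apply Theorem \ref{thm: subsampling_last} to translate this into privacy with respect to the full dataset. Since $B_t$ is a uniform subsample of size $m$ drawn without replacement from $n$ points, the per-iteration privacy loss is $\epsilon_{t} = \varepsilon(S_{1}(x_{t}), b_{t}(x_{t}), n, m)$ as given in \eqref{eq: epsilon under subsampling}. Substituting the prescribed $b_{t}(x_{t})$ yields $S_{1}(x_{t})/(b_{t}(x_{t})\,m) = \epsilon_{0}$, so that $\epsilon_{t} = \ln\bigl[(e^{\epsilon_{0}}-1)m/n + 1\bigr]$; plugging in the definition of $\epsilon_0$ from \eqref{eq: epsilon per iteration} gives $\epsilon_{t} = \epsilon/T$, independent of the random realization of $x_t$.

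Finally, I would invoke the composition theorem for DP mechanisms over the $T$ iterations. Since $x_{t+1}$ is a (post-processing of a) deterministic function of the previously released noisy gradients, the composition is adaptive but Theorem on basic composition still applies, yielding total privacy loss $\sum_{t=1}^{T} \epsilon_{t} = T \cdot \epsilon/T = \epsilon$. The main subtlety I expect is the data-dependence of the noise scale $b_{t}(x_{t})$ through the adaptively revealed iterate $x_t$; this is handled because $x_t$ is already public at the start of iteration $t$, so the Laplace mechanism is applied conditionally on $x_t$ and the per-iteration bound $\epsilon_{t} = \epsilon/T$ holds deterministically for every realization of $x_t$, which is exactly what adaptive composition requires.
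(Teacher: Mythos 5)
Your proposal is correct and follows essentially the same route as the paper's proof: with $b_{t}(x_{t}) = S_{1}(x_{t})/(m\epsilon_{0})$ the Laplace mechanism plus subsampling amplification gives a per-iteration loss $\varepsilon(S_{1}(x_{t}), b_{t}(x_{t}), n, m) = \epsilon/T$, and composition over the $T$ iterations yields $\epsilon$. Your additional remarks on the mini-batch sensitivity $S_{1}(x_{t})/m$ and on the adaptivity being harmless because $x_{t}$ is already public are sound elaborations of steps the paper leaves implicit.
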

\begin{proof}
Using the $b_{t}(x_{t})$ given in the proposition, the privacy loss in one iteration is, $\varepsilon(S_{1}(x), b_{t}(x_{t}), n, m) = \ln \left[1 + (m/n) \left( e^{\epsilon_{0}} - 1 \right) \right] = \epsilon/T$. Finally, we apply Theorem \ref{thm: composition} to conclude that the privacy loss after $T$ iterations is $\epsilon$.
\end{proof}

We are interested in DP-SHB, for it lends itself to an interpretation quite relevant to the differential privacy setting. The noise used in the differentially private versions of the gradient descent algorithm has to be higher as the number of iterations grows, i.e., $b_t(x_t)$ needs to be larger for a larger $T$. This can be seen from equation \eqref{eq: required stoc sigma}. One way to reduce the required noise is to use a smoothed noisy gradient, where the smoothing is recursively performed on the past and the current gradient estimates. This is indeed how DP-SHB works. The update in \eqref{eq: DP-SHB update rule} can be rewritten as
\begin{align} \label{eq: update with smoothed noisy gradient}
x_{t+1} = x_{t} - \frac{\alpha}{1 - \beta} \bar{u}_t,  %{u_{t}},
\end{align}
where $\bar{u}_t$ is a geometrically weighted average of all the gradients up to the current iteration defined recursively as
\begin{align} \label{eq: smoothed gradient}
\bar{u}_{t} = \beta \bar{u}_{t-1} + (1 - \beta) \left( \nabla F_{B_{t}}(x_{t}) + \eta_{t} \right)
\end{align}
with the initial condition $\bar{u}_{0} = (1-\beta) (\nabla F_{B_{0}}(x_{0}) + \eta_{0})$. We note that a similar smoothing strategy as in DP-SHB, which combines mini-batching with a noise-adding mechanism for averaged gradients, has been used in \cite{Park_etal_2016}; however in a different setting, namely for the purpose of private variational Bayesian inference.

\subsection{Analysis of DP-SHB}
\label{sec: Error analysis of DP-SHB}
For analyzing the convergence of DP-SHB, we first cast it as a dynamical system. We introduce the (random) variable
\[
v_{t} =  \nabla F(x_{t}) - \nabla F_{B_{t}}(x_{t}),
\]
which accounts for the error due to subsampling. Using this definition, we can write
\begin{align}
x_{t+1} = x_{t} - \alpha (\nabla F(x_{t}) + \eta_{t} + v_{t}) +  \beta (x_{t} - x_{t-1}).
\end{align}
Then, the dynamical system representation of DP-SHB becomes
\begin{equation}
\label{eq: DP-SHB dynamic system representation}
\begin{split}
\xi_{t+1} &=[A \otimes I_{d}] \xi_{t} +  [B \otimes I_{d}] (u_{t}+ v_{t} + \eta_{t}),  \\
z_{t} &=[C \otimes I_{d}] \xi_{t}, \\
u_{t} &=\nabla F(z_{t}),
\end{split}
\end{equation}
where $I_d$ is the $d\times d$ identity matrix, $\otimes$ denotes the Kronecker product, and the state vector $\xi_{t}$ and the system matrices $A$, $B$, and $C$ are given as
\begin{equation} \label{eq: DP-SHB system matrices} 
\xi_{t} = \begin{bmatrix} x_{t} \\ x_{t-1} \end{bmatrix}, \quad
A = \begin{bmatrix} 1+\beta & -\beta  \\ 1 & 0 \end{bmatrix}, \quad B = \begin{bmatrix} \alpha  \\ 0 \end{bmatrix}, \quad C = \begin{bmatrix} 1 & 0 \end{bmatrix}.
\end{equation}
 
% Note that we have $x_t = [I_d \quad 0_d] \xi_t$. 
In our error analysis, we will consider both stochastic and deterministic versions of HB. In order to do that, we need a uniform bound (in $x_{t}$) for the conditional covariance of $w_{t} := \eta_{t}+v_{t}$ given $x_{t}$ (for the case without subsampling, we simply take $v_{t} = 0$). Note that, due to independence of $\eta_{t}$ and $v_{t}$ conditional on $x_{t}$, the conditional covariance of $w_t$ given $x_t$ satisfies 
\[
\textup{Cov}(w_{t} | x_{t}) = \textup{Cov}(\eta_{t} | x_{t})+ \textup{Cov}(v_{t} | x_{t}).
\]
To handle the contribution to the overall noise by the privacy preserving noise $\eta_{t}$, we make the following assumption.
\begin{asmp}[Bounded $L_{1}$ sensitivity] \label{asmp: Bounded L1 sensitivity} The $L_{1}$ sensitivity function defined in \eqref{eq: L1 sensitivity of f} is bounded in $x$. That is, there exists a scalar constant $S_{1}$ such that
\begin{equation} \label{eq: assumption on L1}
\sup_{x \in \RR^{d}} S_{1}(x) \leq S_{1}.
\end{equation}
\end{asmp}
Assumption \ref{asmp: Bounded L1 sensitivity} is common in the differential privacy literature. For example, the logistic regression model, which we will use to show our numerical experiments in Section \ref{sec: Experimental results}, easily admits such a bound. It turns out that Assumption \ref{asmp: Bounded L1 sensitivity} readily guarantees a bound on the variance of $v_{t}$, the subsampling noise.  The next proposition formally shows this observation. The proof is given in Appendix \ref{app:A}.

\begin{prop} \label{prop: bound on the variance of subsampling} If Assumption \ref{asmp: Bounded L1 sensitivity} holds, the norm of the conditional covariance of $w_{t} = \eta_{t} + v_{t}$ is bounded for all $t$ uniformly in $x_{t}$ as
\begin{equation}
|| \textup{Cov}(w_{t} | x_{t}) || \leq \mathcal{E}_{T}:= \sigma^{2}_{s}(m, n) + 2 \frac{d S_{1}^{2}}{m^{2} \epsilon_{0}^{2}}, \label{eq: bound on the covariance of the overall noise}
\end{equation}
where $\epsilon_{0}$ is given in \eqref{eq: required stoc sigma} and $\sigma^{2}_{s}(m, n)$ is an upper bound on the norm of the covariance  of the error due to subsampling given by
\begin{equation} \label{eq: bound on the subsampling error}
\sigma^{2}_{s}(m, n) = \frac{S_{1}^{2}}{4}  \frac{1}{m} \frac{n-m}{n - 1}.
\end{equation}
\end{prop}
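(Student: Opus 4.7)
The plan is to decompose the conditional covariance using the independence of the two sources of randomness and then bound each piece separately. Given $x_t$, the Laplace noise $\eta_t$ is drawn from a product-Laplace distribution whose scale parameter depends only on $x_t$, while the subsample $B_t$ is drawn independently of $\eta_t$. Hence $\eta_t$ and $v_t$ are conditionally independent given $x_t$, and as noted in the excerpt,
\[
\text{Cov}(w_t \mid x_t) = \text{Cov}(\eta_t \mid x_t) + \text{Cov}(v_t \mid x_t);
\]
by subadditivity of the matrix norm used in the statement, it will suffice to bound each summand uniformly in $x_t$.

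For the Laplace contribution, the components of $\eta_t$ are i.i.d.\ $\text{Laplace}(b_t(x_t))$, each of variance $2b_t(x_t)^2$, so $\text{Cov}(\eta_t \mid x_t) = 2 b_t(x_t)^2 I_d$. Proposition \ref{prop: algstocDP} gives $b_t(x_t) = S_1(x_t)/(m\epsilon_0)$, and Assumption \ref{asmp: Bounded L1 sensitivity} provides $S_1(x_t) \leq S_1$ uniformly in $x_t$. Substituting and accounting for the $d$ diagonal entries of $I_d$ yields the $2 d S_1^2/(m^2\epsilon_0^2)$ term of $\mathcal{E}_T$.

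For the subsampling contribution, set $g_i := \nabla f(x_t; y_i)$ and $\bar g := n^{-1}\sum_i g_i = \nabla F(x_t)$, so that $v_t = \bar g - \nabla F_{B_t}(x_t)$ is the centered sample mean under simple random sampling without replacement of $m$ items from the finite population $\{g_1,\ldots,g_n\}$. The standard finite-population variance identity gives
\[
\text{Cov}(v_t \mid x_t) = \frac{n-m}{m(n-1)}\,\Sigma, \qquad \Sigma := \frac{1}{n}\sum_{i=1}^{n} (g_i - \bar g)(g_i - \bar g)^{\top},
\]
reducing the problem to bounding $\Sigma$. I will control $\Sigma$ via Assumption \ref{asmp: Bounded L1 sensitivity}, which constrains the diameter of $\{g_i\}$, together with a Popoviciu-type inequality: since $|g_{i,k} - g_{j,k}| \leq S_1$ for every pair $(i,j)$ and coordinate $k$, each per-coordinate variance is at most $S_1^2/4$, producing the constant appearing in $\sigma_s^2(m,n)$. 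The main obstacle will be the bookkeeping in this last step---matching the matrix-norm convention of the proposition to the Popoviciu-style per-coordinate bound and tracking whether any spurious factor of $d$ or $2$ creeps in when translating between spectral, trace, and maximum-diagonal norms. Once the norm convention is pinned down, adding the two contributions delivers $\mathcal{E}_T$.
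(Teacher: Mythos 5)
Your overall architecture matches the paper's proof: conditional independence gives the covariance decomposition, the Laplace term is $2b_t(x_t)^2 I_d$ with $b_t(x_t)=S_1(x_t)/(m\epsilon_0)$ bounded by Assumption \ref{asmp: Bounded L1 sensitivity}, and the subsampling term is handled through the finite-population (sampling without replacement) variance formula plus a Popoviciu-type bound. However, the step you yourself flag as ``bookkeeping'' is where the argument actually breaks as stated. To obtain the factor $2d$ in the Laplace contribution you are implicitly accounting for all $d$ diagonal entries, i.e.\ bounding the norm by the trace; but if you then bound the subsampling covariance the same way, your per-coordinate Popoviciu bound ``$|g_{i,k}-g_{j,k}|\le S_1$, hence each coordinate variance $\le S_1^2/4$'' sums over $k$ to $d\,S_1^2/4$, giving $\sigma_s^2(m,n)$ inflated by a factor of $d$. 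Conversely, if you drop the trace and use the spectral norm to avoid that $d$, you cannot simultaneously justify the $2d$ you claimed for the Laplace part by ``counting diagonal entries.'' So under any single consistent norm convention, your plan does not deliver the stated constant $S_1^2/4$.

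The missing idea, which is how the paper closes this, is to exploit the $L_1$ (sum-over-coordinates) structure of the sensitivity in \eqref{eq: L1 sensitivity of f}. Define the coordinate-wise sensitivities $S_{1,k}(x)=\sup_{y,y'}\bigl\vert \partial f(x;y)/\partial x_k - \partial f(x;y')/\partial x_k \bigr\vert$, apply Popoviciu coordinate-wise to get $\sigma_{y,k}^2 \le S_{1,k}(x)^2/4$, and then bound the trace of the subsampling covariance via
\[
\sum_{k=1}^{d} S_{1,k}(x)^2 \;\le\; \Bigl[\sum_{k=1}^{d} S_{1,k}(x)\Bigr]^2 \;\le\; S_1^2,
\]
so the sum over the $d$ coordinates is absorbed into $S_1$ itself rather than producing a factor $d$. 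Replacing your uniform per-coordinate range $S_1$ by the coordinate-wise $S_{1,k}$ and adding this summation step turns your sketch into the paper's proof; without it, the claimed form of $\sigma_s^2(m,n)$ in \eqref{eq: bound on the subsampling error} is not reached.
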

Note that $\mathcal{E}_{T}$ depends on the total number of iterations $T$ through $\epsilon_{0}$, hence the subscript. 

Before going into the detailed technical analysis, we find it useful to provide a sketch of it. Our purpose is to find an upper bound for the expected sub-optimality
$\mathbb{E}[F(x_{t}) - F^{\ast}]$ where $x^{\ast}$ is the optimal solution of \eqref{eq: main_prob} and $F^{\ast}:=F(x^{\ast})$ is the minimum value of $F$. The upper bound we will prove is of
the form
\[
  \mathbb{E}[F(x_{t}) - F^{\ast}] \leq \rho^{2t} \psi_{0} +
  \mathcal{E}_{T} R, \quad 0 \leq t \leq T,
 \]
 for some rate $\rho$, a non-negative $\psi_{0}$ that is related to the initial point $x_{0}$, and a non-negative $R$. As we will show soon, this bound in the DP setting has interesting aspects: Note that, as an issue unique to the differential privacy context, \emph{the term $\mathcal{E}_{T}$ increases with the total number of iterations, $T$}. This is because for fixed privacy level $\epsilon$, as $T$ increases $\epsilon_0$ defined in  \eqref{eq: required stoc sigma} decreases. Hence, increasing the number of iterations $T$ makes the first term $\rho^{2T} \psi_0$ smaller, however it leads to an increase in the second term $\mathcal{E}_T R$. This makes the analysis of DP-SHB fundamentally different compared to the analysis of the standard SHB in the stochastic optimization literature (see e.g. \cite{can2019accelerated, gadat2018stochastic, flammarion2015averaging}), where the second term is scaled with the fixed noise variance parameter that does not change with the number of iterations.

For analysis purposes, we define $\bar{F}: \mathbb{R}^{2d} \mapsto \mathbb{R}$ such that for $\xi_{t} = \begin{bmatrix} x_{t}^{\top} & x_{t-1}^{\top} \end{bmatrix}^{\top}$, we have $\bar{F}(\xi_{t}) = F(x_{t})$. Also, for a $2 \times 2$ symmetric positive-definite matrix $P$ and a positive scalar $c$, we set the Lyapunov function
\[
V_{P, c}(\xi) = V_{P}(\xi) + c(\bar{F}(\xi) - F^{\ast})
\]
with $V_{P}(\xi) = (\xi - \xi^{\ast})^{\top} [P \otimes I_{d}] (\xi - \xi^{\ast})$. The following proposition, which is constructed in a similar vein as Proposition 4.6 in \cite{aybat2018robust}, allows us to obtain expected sub-optimality bounds depending on the parameters $\alpha$ and $\beta$ as well as the noise level $\mathcal{E}_{T}$ and a convergence rate $\rho$.
% provided that there exists a $\tilde{P}$ matrix satisfying the $3\times 3$ matrix inequality \eqref{eq: mat_ineq}. 
A proof is given in Appendix \ref{app:Proof_2ndWay}.
\begin{prop} \label{prop: bd_HB} 
Given $F \in \mathcal{S}_{\mu, L}(\mathbb{R}^d)$, consider running DP-SHB algorithm with constant parameters $\alpha$ and $\beta$ for $T$ iterations and with $b_{t}(x_{t})$ in Proposition \ref{prop: algstocDP} so that $\epsilon$-differential privacy is satisfied. Suppose that Assumption \ref{asmp: Bounded L1 sensitivity} holds and there exists $\rho \in (0,1)$, a $2 \times 2 $ positive semi-definite symmetric matrix $P$, and constants $c_{0}, c \geq 0$ such that
\begin{equation} \label{eq: mat_ineq} 
c_0 X_{0} + c [ X_{1} + (1-\rho^2) X_{2} ] \succeq \Phi(A, B, P, \rho),
\end{equation}
where
\[
X_0 = \begin{bmatrix} 2\mu L C^{\top} C& -(\mu+L) C^{\top} \\ -(\mu+L) C & 2I_d \end{bmatrix},  \quad \Phi( A, B, P, \rho) = \begin{bmatrix} A^{\top} P A - \rho^{2} P  & A^{\top} P B \\ B^{\top} P A & B^{\top} P B \end{bmatrix},
\]
the matrices $A, B, C$ are as in \eqref{eq: DP-SHB system matrices}, and
\[
X_{1} = \frac{1}{2} \begin{bmatrix} -L \beta^{2} & L \beta^{2} & -(1-L\alpha) \beta \\ L \beta^{2} & -L \beta^{2} & (1-L\alpha) \beta \\ -(1-L\alpha) \beta & (1-L\alpha) \beta & \alpha(2-L\alpha)
\end{bmatrix}, \quad X_{2} = \frac{1}{2} \left[ {\begin{array}{ccc}
   \mu & 0 & -1 \\
   0 & 0 & 0 \\
  -1 & 0 & 0
  \end{array} } \right].
\]
Then, for all $0 \leq t \leq T$, we obtain
\begin{align}\label{eq:shb-perf-upper-bound}
 \mathbb{E}[F(x_{t}) - F^{\ast}] & \leq  \rho^{2t} \frac{1}{c}V_{P, c}(\xi_0) + \frac{1-\rho^{2t}}{1-\rho^2} \frac{L d \alpha^{2}}{2} \mathcal{E}_{T} \left( 1 + \frac{2 P_{12}^{2}}{P_{22} c L+ 2 |P| } \right), % R(P, c, L),
\end{align}
where $\mathcal{E}_{T}$ is defined in \eqref{eq: bound on the covariance of the overall noise}, $|P|$ denotes the determinant of $P$ and we have the convention $0/0 = 0$ for the last factor.
\end{prop}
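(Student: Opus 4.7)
The plan is to follow the standard Lyapunov dynamical-systems template: first derive a noise-free one-step contraction $V_{P,c}(\bar\xi_{t+1}) \le \rho^{2} V_{P,c}(\xi_{t})$ from the matrix inequality~\eqref{eq: mat_ineq}, then add the expected contribution of the privacy/subsampling noise $w_{t} = \eta_{t} + v_{t}$ in conditional expectation, unroll the recursion in $t$, and finally convert the resulting Lyapunov bound into one on $\mathbb{E}[F(x_{t}) - F^{\ast}]$ using $V_{P}\succeq 0$. Here $\bar\xi_{t+1}$ denotes the $\xi_{t}$-measurable noise-free update.

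For the deterministic descent, I would read each of $X_{0}, X_{1}, X_{2}$ as the matrix of a classical analytic inequality evaluated on $v := [(\xi_{t}-\xi^{\ast})^{\top}, u_{t}^{\top}]^{\top}$, where $u_{t} = \nabla F(z_{t}) = \nabla F(x_{t})$ (since $z_{t} = x_{t}$ for HB) and $\nabla F(x^{\ast}) = 0$. Specifically, co-coercivity of the gradients of $F\in\mathcal{S}_{\mu,L}$ gives $v^{\top}[X_{0}\otimes I_{d}]v \le 0$; the $L$-smoothness descent lemma applied at the noise-free iterate $\bar x_{t+1} := x_{t} - \alpha u_{t} + \beta(x_{t}-x_{t-1})$ gives $v^{\top}[X_{1}\otimes I_{d}]v \le F(x_{t}) - F(\bar x_{t+1})$; and $\mu$-strong convexity with anchor $x^{\ast}$ gives $v^{\top}[X_{2}\otimes I_{d}]v \le F^{\ast} - F(x_{t})$. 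Because $v^{\top}[\Phi(A,B,P,\rho)\otimes I_{d}]v = V_{P}(\bar\xi_{t+1}) - \rho^{2} V_{P}(\xi_{t})$, right- and left-multiplying~\eqref{eq: mat_ineq} by $v$ and plugging in the three inequalities collapses the free parameter $c_{0}$ against a non-positive quantity and yields the claimed deterministic contraction of $V_{P,c}$.

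For the noisy step, I write $\xi_{t+1} = \bar\xi_{t+1} + [B\otimes I_{d}]w_{t}$ and use $\mathbb{E}[w_{t}\mid\xi_{t}] = 0$ together with $\textup{tr}\,\textup{Cov}(w_{t}\mid\xi_{t}) \le d\,\mathcal{E}_{T}$ (implied by Proposition~\ref{prop: bound on the variance of subsampling} applied coordinatewise). Expanding $V_{P}(\xi_{t+1})$ as a quadratic in $w_{t}$ and applying $L$-smoothness to expand $F(x_{t+1})$ around $\bar x_{t+1}$, the linear-in-$w_{t}$ pieces vanish in conditional expectation because $\bar\xi_{t+1}$ and $\nabla F(\bar x_{t+1})$ are $\xi_{t}$-measurable, leaving a purely quadratic residual. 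The hard part is extracting the sharp noise constant $\tfrac{L}{2}\bigl(1 + \tfrac{2P_{12}^{2}}{P_{22}cL + 2|P|}\bigr)$ announced in the statement: a naive summation (contribution $\alpha^{2}P_{11}$ to $V_{P}$ plus smoothness penalty $cL\alpha^{2}/2$ on $F(x_{t+1})$) only yields the looser $\alpha^{2}(P_{11}+cL/2)$. The tighter expression should emerge from a Schur-complement manipulation of the $2\times 2$ block $P + (cL/2)e_{1}e_{1}^{\top}$ that governs how noise in $x_{t+1}$ enters $V_{P,c}$: the off-diagonal $P_{12}$ can be used to partially absorb the smoothness penalty against the slack left in the deterministic LMI, and the quantities $|P|=\det P$ and $P_{22}$ appearing in the denominator are exactly what such a Schur decomposition produces. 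Once the one-step inequality $\mathbb{E}[V_{P,c}(\xi_{t+1})\mid\xi_{t}] \le \rho^{2}V_{P,c}(\xi_{t}) + \tfrac{Ld\alpha^{2}}{2}\mathcal{E}_{T}\bigl(1 + \tfrac{2P_{12}^{2}}{P_{22}cL + 2|P|}\bigr)$ is in hand, I unroll by induction on $t$ using the geometric sum $\sum_{k=0}^{t-1}\rho^{2k} = (1-\rho^{2t})/(1-\rho^{2})$ and divide through by $c$ via $c(F(x_{t}) - F^{\ast}) \le V_{P,c}(\xi_{t})$, which follows from $V_{P}\succeq 0$, to conclude~\eqref{eq:shb-perf-upper-bound}; the convention $0/0 = 0$ covers the degenerate case $P_{22}cL + 2|P| = 0$, which forces $P_{12}=0$ by the PSD property of $P$.
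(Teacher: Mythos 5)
Your overall route is the same as the paper's: a dissipativity/Lyapunov argument in which $X_0$, $X_1$, $X_2$ encode co-coercivity, the descent lemma, and strong convexity, the LMI \eqref{eq: mat_ineq} is sandwiched by $v=[(\xi_t-\xi^\ast)^\top,\ \nabla F(x_t)^\top]^\top$ to get a one-step contraction of $V_{P,c}$, the conditionally zero-mean noise $w_t=\eta_t+v_t$ contributes only quadratically, and the recursion is unrolled and divided by $c$. That skeleton, and your handling of the deterministic part, match Lemma \ref{lem: app_lemma} and the surrounding argument in Appendix \ref{app:Proof_2ndWay} (the paper applies $L$-smoothness directly at the noisy iterate $x_{t+1}$ rather than at a noise-free update plus perturbation, but these are equivalent bookkeeping choices and both produce the $\tfrac{L\alpha^2}{2}\mathbb{E}\|w_t\|^2$ term with the cross terms killed by $\mathbb{E}[w_t\mid x_t]=0$).

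The genuine gap is exactly the step you flag as ``the hard part'': the noise amplification factor $\bigl(1+\tfrac{2P_{12}^2}{P_{22}cL+2|P|}\bigr)$ is never actually derived. Your own accounting gives a per-step noise constant $\alpha^2\bigl(P_{11}+\tfrac{cL}{2}\bigr)$, i.e.\ after dividing by $c$ a factor $\bigl(1+\tfrac{2P_{11}}{cL}\bigr)$; since $P_{11}=(|P|+P_{12}^2)/P_{22}$ when $P_{22}>0$, this is strictly weaker than the claimed factor, so the proposition as stated does not follow from your recursion. The sentence ``the tighter expression should emerge from a Schur-complement manipulation \ldots\ absorb the smoothness penalty against the slack left in the deterministic LMI'' is a conjecture about a mechanism, not an argument: you neither exhibit the manipulation nor verify that the available slack (which is controlled by $c_0X_0$ and need not be quantitatively useful) suffices. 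The paper does not re-derive this constant from scratch either; its proof consists of establishing Lemma \ref{lem: app_lemma} as the heavy-ball counterpart of Lemma 4.5 of \cite{aybat2018robust} and then invoking Proposition 4.6 and Corollary 4.7 of that reference, which is precisely where the factor involving $P_{12}^2$, $P_{22}$ and $|P|$ is established for Lyapunov functions of the form $V_P+c(F-F^\ast)$ with noise entering through $B\otimes I_d$. To complete your proposal you must either reproduce that corollary's argument (showing how the $V_P$-part of the noise contribution is bounded using the Schur complement $|P|/P_{22}$ rather than $P_{11}$) or explicitly cite it; as written, the key constant in \eqref{eq:shb-perf-upper-bound} is asserted rather than proved.
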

As distinct from the approach in \cite{aybat2018robust}, which is developed for Nesterov's accelerated gradient method, the bound in \eqref{eq:shb-perf-upper-bound} is constructed by adapting the results for the deterministic HB \cite{hu2017dissipativity} to the stochastic setting. We also note that the matrix inequality \eqref{eq: mat_ineq} is $3\times 3$ and can be solved numerically for $\rho$ and $P$ in practice by a simple grid search over the rate $\rho$ and entries of the $2\times 2$ matrix $P$ (see, e.g., \cite{hu2017dissipativity, lessard2016analysis, can2019accelerated}). Therefore, the right-hand side of \eqref{eq:shb-perf-upper-bound} that provides performance bounds can be computed numerically in practice.

\subsection{Analysis of quadratic objective function case} \label{sec: Analysis of quadratic objective function case}

In this section, we will present explicit bounds for a quadratic objective function in order to provide more insight into the interplay between $\alpha$, $\beta$, and the number of iterations $T$. We consider the following quadratic function 
\begin{equation} \label{eq: quadratic objective}
F(x) = \frac{1}{2}x^{\top} Q x+a^{\top} x+b, 
\end{equation}
where $Q \in \RR^{d \times d}$ is symmetric positive definite, $a \in \RR^d$ a column vector and $b \in \RR$ is a scalar. For such a strongly convex quadratic objective function, an exact bound for the objective error can be presented.

To put it in a differential privacy context, we can assume that the parameters of $F(x)$ depend on some data $Y = \{y_{1}, \ldots, y_{n}\}$. For example, $F$ is a sum of functions $f(\cdot; y_{i})$ that are quadratic in $x$ (hence $F$ itself is quadratic in $x$), and the coefficients of the quadratic expression for each $f(\cdot; y_{i})$ depend on $y_{i}$. We will assume that, the $L_{1}$ sensitivity of $F$ is such that the required DP noise satisfies $\mathbb{E}(\eta_{t} \eta_{t}^{\top}) = \sigma_{T}^{2} I_{d}$ for some $\sigma_{T}^{2} > 0$. For simplicity, we assume that no subsampling is performed, i.e., $v_{t} = 0$.

The optimal values for HB in the non-noisy setting has been given in \cite{polyak1987introduction} as $\alpha_{\textup{HB}} = 4/(\sqrt{\mu}+\sqrt{L})^2$ and $\beta_{\textup{HB}} = (\sqrt{\kappa}-1)^{2}/(\sqrt{\kappa}+1)^{2}$ where $\kappa := L/\mu$. 
% (the related theorem is quoted in Appendix). 
However, those ``optimal'' values may not be the best selection for $\alpha$ and $\beta$ for DP-SHB. There are two reasons for this: First, due to privacy concerns, noise is inevitable in DP-SHB. Presence of noise shows as a second additive term in the bound for the error. This second term is affected by the selection of $\alpha$. Second, the amount of privacy preserving noise increases with the total number of iterations. In general, the error bound is a sum of two terms. The first of these two terms decreases with the convergence rate $\rho$ of the algorithm and the second term is due to privacy preserving noise. It will be shown that $\alpha$ and $\beta$ have an influence on both the convergence rate and the multiplicative constant of the additive error due to noise. We will additionally see that a selection of $\alpha, \beta$ pair that improves the rate also increases the additive error term due to the presence of privacy preserving noise. Therefore, we can talk about a trade-off between the convergence rate and the additive noise term in our performance bounds, which is adjusted by the parameters $\alpha$ and $\beta$. In that respect, the ``optimal'' $\alpha$ and $\beta$ in the non-noisy setting is typically not the best choice of $\alpha$ and $\beta$ in the DP setting.

By adapting \cite[Thm 12]{can2019accelerated}, given for parameter choices $\alpha_{\textup{HB}}, \beta_{\textup{HB}}$, we present our result for the error bound given by any pair $\alpha, \beta$. A proof is given in Appendix \ref{app:C}.
\begin{thm} \label{thm: opt_quadraticHB} 
Let $F \in \mathcal{S}_{\mu, L}(\mathbb{R}^d)$ be a quadratic function given in \eqref{eq: quadratic objective}. Consider the iterates $\{x_{t} \}_{0 \leq t \leq T}$ of the DP-SHB method, which is run for $T$ iterations with noisy gradients $\nabla F(x_{t}) + w_{t}$ where $\mathbb{E}(w_t | x_t) = 0$ and $\mathbb{E}(w_t w_t^{\top} | x_t) \preceq \sigma_{T}^{2} I$ for some positive constant $\sigma_{T}^{2} > 0$. If DP-SHB is run with parameters $(\alpha, \beta)$, then
\begin{equation} \label{eq: error bound for quadratic function_main}
\mathbb{E}[F(x_{t})] - F(x^{\ast}) \leq V(\xi_{0}) C_{t}^{2} \rho^{2t} + Lm(\alpha, \beta),
\end{equation}
where 
\[
m(\alpha, \beta) = \frac{\sigma_{T}^{2}}{2} \sum_{i=1}^{d}\frac{2 \alpha (1 + \beta)}{(1-\beta)\lambda_{i}(2 + 2\beta - \alpha \lambda_{i})}
\]
with $\lambda_{i}$'s being the eigenvalues of $Q$. In \eqref{eq: error bound for quadratic function_main}, we have
\begin{equation} \label{eq: convergence rate of DP-SHB} 
\rho = \max \{ | a_{\mu,+}|, |a_{\mu,-}|, |a_{L,+}|, |a_{L,-} |\},
\end{equation}
where
\[
a_{\lambda, \pm} = \frac{(1+\beta)(1-\alpha\lambda) \pm\sqrt{(1+\beta)^{2}(1-\alpha\lambda)^{2}-4\beta(1-\alpha\lambda)}}{2}
\]
and $V(\xi_{0})$ is given by 
\[
V(\xi_{0}) = \mathbb{E}[\norm{(\xi_{0} - \xi^{\ast})(\xi_{0} - \xi^{\ast})^{\top}}]+ \frac{ \sigma_{T}^{2} \alpha^{2}}{1-\rho^{2}}
\]
with $C_{t} = \mathcal{O}(t)$ being a sequence of scalar coefficients, provided that $\rho < 1$.
\end{thm}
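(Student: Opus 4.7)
The core idea is to exploit the fact that, for a quadratic objective, the DP-SHB update becomes an affine linear recursion in $x_t-x^*$, so the dynamics decouples into $d$ independent scalar AR(2) processes indexed by the eigenvalues $\lambda_i$ of $Q$. Writing $Q=U\Lambda U^{\top}$ with $\Lambda=\mathrm{diag}(\lambda_1,\ldots,\lambda_d)$ and setting $y_t = U^{\top}(x_t-x^*)$, the update \eqref{eq: DP-SHB update rule} (with $v_t=0$ and noise $w_t$ in place of $\eta_t$) becomes, coordinate-wise,
\begin{equation*}
y_{t+1,i} \;=\; (1+\beta-\alpha\lambda_i)\,y_{t,i} \;-\; \beta\, y_{t-1,i} \;-\; \alpha\, \widetilde w_{t,i},
\end{equation*}
where $\widetilde w_t = U^{\top} w_t$ still satisfies $\mathbb{E}[\widetilde w_t\mid x_t]=0$ and $\mathbb{E}[\widetilde w_t \widetilde w_t^{\top}\mid x_t]\preceq \sigma_T^{2} I$. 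Since $F(x_t)-F(x^*)=\tfrac{1}{2}\sum_i \lambda_i y_{t,i}^2$, it suffices to bound $\mathbb{E}[y_{t,i}^2]$ for each $i$ and sum.

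For fixed $i$, stack $\zeta_{t,i}=(y_{t,i},y_{t-1,i})^{\top}$ to obtain the linear system $\zeta_{t+1,i} = A_{\lambda_i}\zeta_{t,i} + B\,\widetilde w_{t,i}$ with $A_{\lambda}=\bigl(\begin{smallmatrix}1+\beta-\alpha\lambda & -\beta\\ 1 & 0\end{smallmatrix}\bigr)$ and $B=(-\alpha,0)^{\top}$. The eigenvalues of $A_{\lambda}$ are exactly the roots $a_{\lambda,\pm}$ given in the theorem. The convergence rate $\rho$ then equals $\max_i \max\{|a_{\lambda_i,+}|,|a_{\lambda_i,-}|\}$; a short monotonicity argument in $\lambda$ (the quadratic under the square root is convex in $\lambda$) shows the maximum over $\lambda\in[\mu,L]$ is attained at one of the endpoints, yielding the four-term $\max$ in \eqref{eq: convergence rate of DP-SHB}. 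Unrolling the recursion gives $\zeta_{t,i}=A_{\lambda_i}^{t}\zeta_{0,i}+\sum_{k=0}^{t-1}A_{\lambda_i}^{t-1-k}B\,\widetilde w_{k,i}$, so by orthogonality of the noise increments with past states,
\begin{equation*}
\mathbb{E}\bigl[\|\zeta_{t,i}\|^2\bigr] \;\le\; \|A_{\lambda_i}^{t}\|^2\,\mathbb{E}\|\zeta_{0,i}\|^2 \;+\; \alpha^{2}\sigma_T^{2}\sum_{k=0}^{t-1}\|A_{\lambda_i}^{k}e_1\|^{2}.
\end{equation*}
The Jordan-form bound $\|A_{\lambda_i}^{k}\|\le C_k \rho^{k}$ with $C_k=\mathcal{O}(k)$ (needed only when the two eigenvalues coincide or are close) handles the transient term as $C_t^{2}\rho^{2t}\|\zeta_{0,i}\|^{2}$; the same bound applied inside the noise sum, combined with $\sum_{k\ge0}\rho^{2k}=1/(1-\rho^{2})$ and absorbing the polynomial factors into an overall $C_t^{2}$ out front, produces the additional $\sigma_T^{2}\alpha^{2}/(1-\rho^{2})$ inside $V(\xi_0)$.

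The second (steady-state) piece of the bound is obtained by solving the discrete Lyapunov equation for the stationary covariance $\Sigma_i$ of $\zeta_{t,i}$, namely $\Sigma_i = A_{\lambda_i}\Sigma_i A_{\lambda_i}^{\top} + \alpha^{2}\sigma_T^{2}\,e_1 e_1^{\top}$. This is a $2\times 2$ linear system whose closed-form solution for the $(1,1)$ entry, after using the characteristic polynomial of $A_{\lambda_i}$, gives
\begin{equation*}
\bigl(\Sigma_i\bigr)_{1,1} \;=\; \frac{\alpha^{2}\sigma_T^{2}(1+\beta)}{(1-\beta)\,\alpha\lambda_i\,(2+2\beta-\alpha\lambda_i)}.
\end{equation*}
Multiplying by $\lambda_i/2$, summing over $i$, and upper-bounding the coefficient by $L$ times the displayed sum yields the $L\,m(\alpha,\beta)$ term. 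Combining the transient and steady-state contributions and using $F(x_t)-F(x^*)=\tfrac{1}{2}\sum_i\lambda_i y_{t,i}^{2}$ delivers \eqref{eq: error bound for quadratic function_main}.

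\textbf{Main obstacle.} The delicate point is the $C_t$ factor: when the two roots $a_{\lambda,\pm}$ coincide for some eigenvalue $\lambda_i$, $A_{\lambda_i}$ is a nondiagonalizable Jordan block and $\|A_{\lambda_i}^{k}\|$ genuinely grows like $k\rho^{k}$, which is why the theorem states $C_t=\mathcal{O}(t)$ rather than a constant; this factor must also be tracked through the noise sum and cannot simply be dropped. Everything else (the endpoint maximization defining $\rho$, the Lyapunov solve, and the decoupling) is essentially a careful but routine computation for a linear time-invariant system driven by white noise.
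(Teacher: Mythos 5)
Your overall route is the same as the paper's: after diagonalizing $Q$ the iteration decouples into $d$ two-dimensional linear systems $T_i=\bigl(\begin{smallmatrix}1+\beta-\alpha\lambda_i & -\beta\\ 1 & 0\end{smallmatrix}\bigr)$ driven by the noise, the rate $\rho$ is the spectral radius maximized over $\lambda\in\{\mu,L\}$ (the paper argues quasi-convexity of $\rho_\lambda$), powers are controlled by a Jordan-type bound $\|T_i^t\|\le C_t\rho^{t}$ with $C_t=\mathcal{O}(t)$, and your closed form for the stationary variance $(\Sigma_i)_{1,1}=\alpha^2\sigma_T^2(1+\beta)/\bigl[(1-\beta)\alpha\lambda_i(2+2\beta-\alpha\lambda_i)\bigr]$ is exactly what produces $m(\alpha,\beta)$. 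However, there is a genuine bookkeeping gap in how you handle the accumulated-noise term. From the unrolled recursion the noise contributes $\alpha^{2}\sigma_T^{2}\sum_{k=0}^{t-1}\|A_{\lambda_i}^{k}e_1\|^{2}$; bounding this by $C_t^{2}/(1-\rho^{2})$ as you describe gives a term that does \emph{not} decay in $t$ (indeed it grows like $t^{2}$), so it cannot be ``absorbed'' into the transient term $V(\xi_0)C_t^{2}\rho^{2t}$. Moreover, once you have bounded that full sum, adding the stationary Lyapunov term on top double-counts the same noise: the finite sum and the stationary covariance are two accountings of one and the same quantity, so your plan as written does not yield the bound \eqref{eq: error bound for quadratic function_main} in the stated form.

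The fix is the decomposition the paper actually uses: write $\sum_{k=0}^{t-1}A^{k}NA^{\top k}=\Sigma_{\infty}-\sum_{k\ge t}A^{k}NA^{\top k}$, where $\Sigma_{\infty}$ solves the Lyapunov equation; take the trace of $\Sigma_{\infty}$ exactly (this is where $L\,m(\alpha,\beta)$ comes from, your $(\Sigma_i)_{1,1}$ computation), and bound the norm of the subtracted tail by $C_t^{2}\rho^{2t}\alpha^{2}\sigma_T^{2}/(1-\rho^{2})$ using the Jordan bound and the geometric series — that tail bound is precisely the origin of the $\sigma_T^{2}\alpha^{2}/(1-\rho^{2})$ piece sitting inside $V(\xi_0)$ and multiplied by $C_t^{2}\rho^{2t}$. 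With this correction your argument coincides with the paper's proof. One further caveat: the eigenvalues of your $A_{\lambda}$ are $\bigl[(1+\beta-\alpha\lambda)\pm\sqrt{(1+\beta-\alpha\lambda)^{2}-4\beta}\bigr]/2$, which is what the paper's appendix works with; they are not literally the expression $a_{\lambda,\pm}$ displayed in the theorem statement (that display has the Nesterov-type form with $(1+\beta)(1-\alpha\lambda)$ and $4\beta(1-\alpha\lambda)$), so your claim that they are ``exactly the roots given in the theorem'' should be flagged rather than asserted — the discrepancy is in the paper itself, but your proof should use the heavy-ball roots.
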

Note that in Theorem \ref{thm: opt_quadraticHB} we considered the case with uncorrelated and bounded noise variance, which generalizes over the independent noise setting. To the best of our knowledge, such a result has not been shown before in the literature. 

\paragraph{\textbf{Numerical demonstration:}} Here, we illustrate the effect of algorithm parameters over the error bound given in Theorem \ref{thm: opt_quadraticHB}. The dimension of the objective function is taken $d = 2$, and $Q$ is chosen as the $2 \times 2$ diagonal matrix 
% \[
% Q = \begin{bmatrix} \mu & 0 \\ 0 & L \end{bmatrix}
% \]
with $\mu = 0.5$ and $L = 1$ on its diagonal, so that its eigenvalues are $\mu$ and $L$. 

We take $C_{t} = t$ for simplicity of the presentation.\footnote{$C_t$ is a constant multiple of $t$ but the constant in front of $t$ would not change the qualitative behavior of the plots, only shifting the graphs by a constant factor in the logarithmic scale.}  With fixed stepsizes $\alpha \leq 1/L$, the convergence rate $\rho$ in \eqref{eq: convergence rate of DP-SHB} versus $\beta$ is plotted in Figure \ref{fig: quadraticHB_bound} for several values of $\alpha$. As for the noise variance, we considered $\sigma_{T}^{2} = (T c_{w})^{2}$ to represent increasing noise variance in the total number of iterations. We repeated our experiments for two different values of $c_{w}$, namely for $c_{w} = 10^{-4}$ (representing a less noisy, hence less private scenario), and $c_{w} = 10^{-2}$ (representing a more noisy, hence more private scenario). We observe that the ``optimal'' $\beta$ value in terms of convergence rate $\rho$ (which is indicated at the bottom row of Figure \ref{fig: quadraticHB_bound}) shows a reliable performance.

\begin{figure}[ht]
\begin{center}
\includegraphics[scale=0.75]{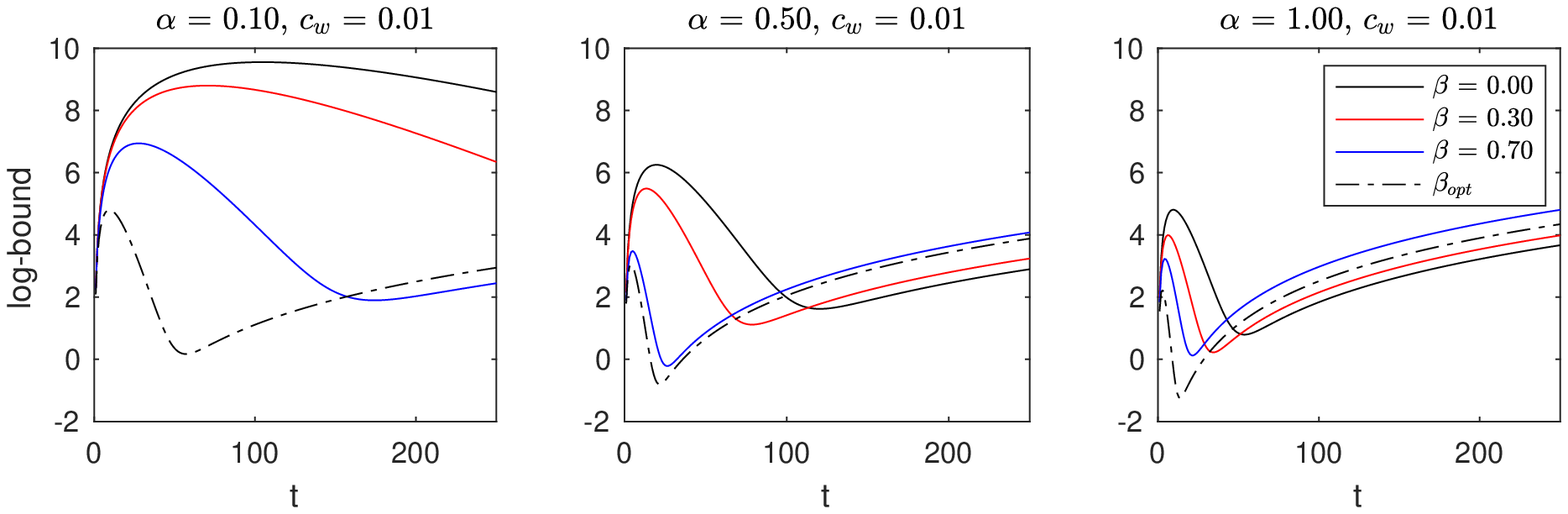} \\
\includegraphics[scale=0.75]{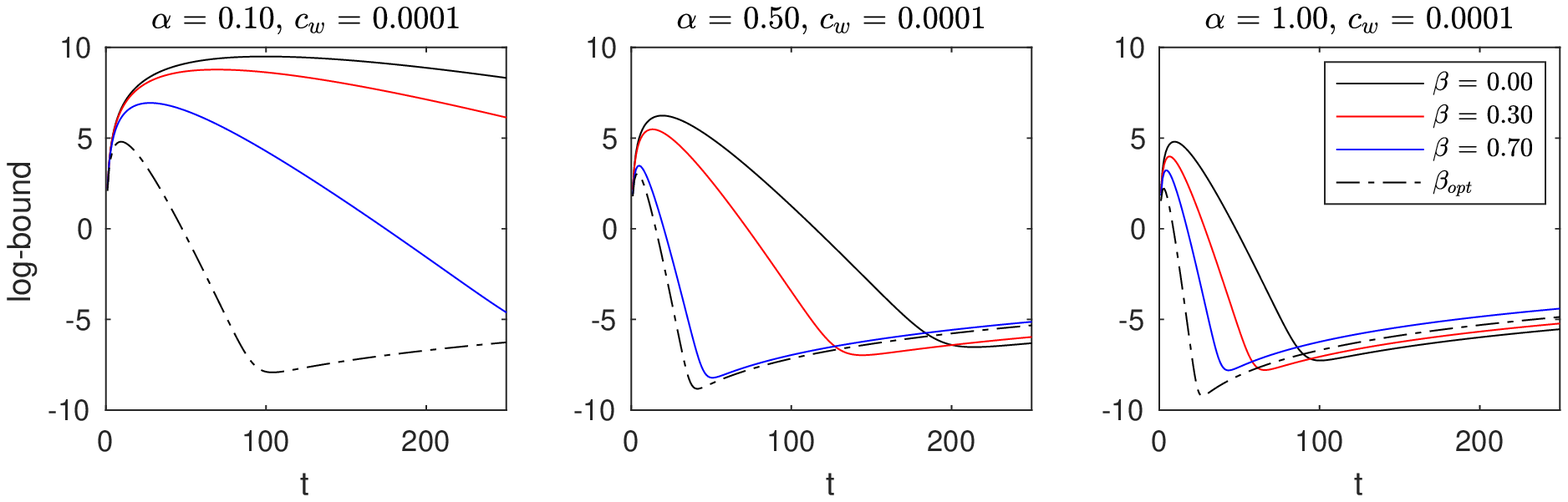} \\
\includegraphics[scale=0.75]{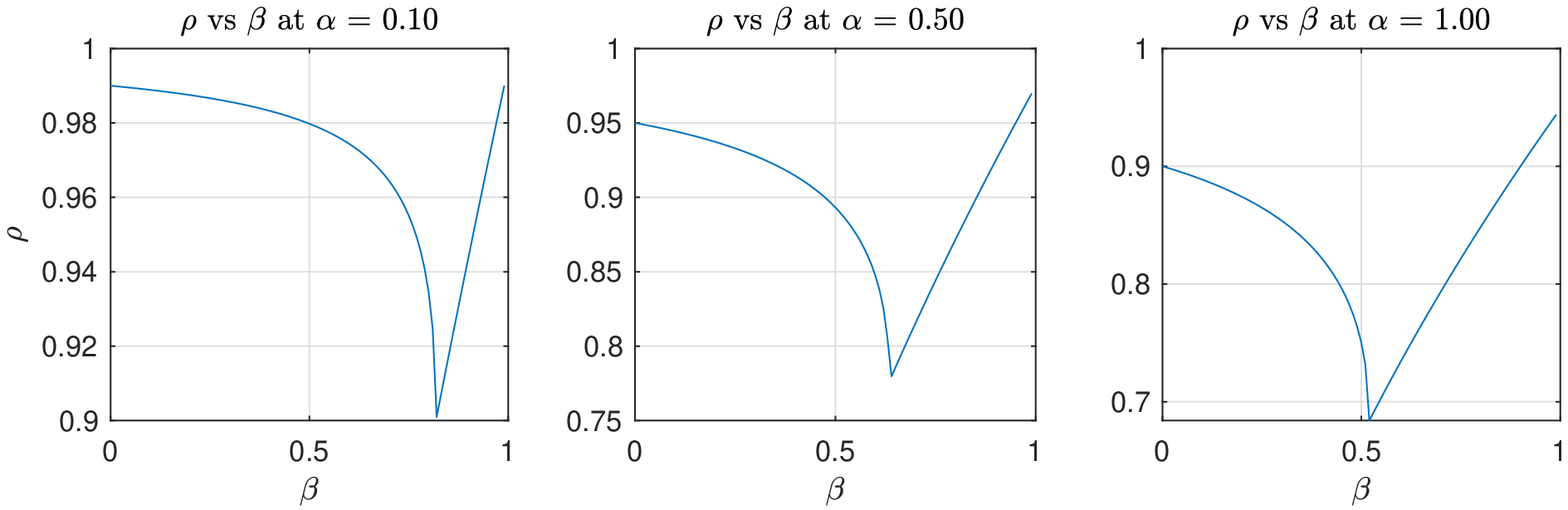}
\end{center}
\caption{DP-SHB performance for the quadratic objective function case}
\label{fig: quadraticHB_bound}
\end{figure}

\section{Differentially Private Accelerated Algorithms} \label{sec: Differentially Private Accelerated Algorithms}

In this section, we will investigate NAG in a differential privacy setting, and propose two ways to tailor it for improved performance under differential privacy.

In the following discussion, we will assume that Assumption \ref{asmp: Bounded L1 sensitivity} on the existence of an upper bound $S_{1}$ on the sensitivity holds, like in the previous section. Furthermore, we will assume that the upper bound $S_{1}$ is considered while determining the parameter $b_{t}$ of the privacy preserving Laplace, so that $b_{t}$ is independent from the current state. Using a state-independent sensitivity to determine the Laplace parameter is not uncommon, especially when it is hard to identify $S_{1}(x)$ for all $x$. An example to this case can be found in Section \ref{sec: Experimental results}, in particular the sensitivity bound in \eqref{eq: sensitivity_logreg} for the logistic regression model, which is independent of the state $x$.
     
Recall the NAG update in \eqref{eq: NAG Basic}. A straightforward differentially private version of NAG would be obtained by cluttering the gradient with the privacy-preserving noise, just as in the DP-SHB algorithm. The corresponding change in NAG would be
\begin{align}\label{eq: DP-NAG Basic}
\begin{split}
  & x_{t+1}  = z_{t} - \alpha (\nabla F(z_{t}) + \eta_{t}), \\
  & z_{t} = (1+\beta) x_{t} - \beta x_{t-1},
\end{split}
\end{align}
where $\eta_{t, i} \overset{\textup{i.i.d.}}{\sim} \textup{Laplace}(b)$ for $i = 1, \ldots, d$, and $b = \frac{S_{1}}{n \epsilon_{0}}$ with $\epsilon_{0}$ is given in \eqref{eq: epsilon per iteration}. The resulting algorithm will be referred to as DP-NAG.

In DP-NAG, the stepsize $\alpha$ (hence $\beta$) and the DP noise parameter $b$ are taken constant. That begs the question whether the performance of DP-NAG could be improved, if we let $b$ and $\alpha$ depend on $t$, the iteration number. We propose two methods to improve the performance of DP-NAG while preserving the same level of privacy. The first method seeks to improve the algorithm by making the DP variance parameter $b$ dependent on the iteration number, whereas the second considers varying $\alpha$ (hence $\beta$) with iterations.

\subsection{NAG with optimized DP variance}\label{subsec:DP-NAG} We first present an error bound for NAG that uses noisy gradients. Let $E_{t} = \mathbb{E}(F(x_{t})) - F^{\ast}$. The following theorem is adapted from \cite[Theorem 2.3]{aybat2019masg}.

\begin{thm}
Let $F \in \mathcal{S}_{\mu, L}(\mathbb{R}^d)$ and suppose that Assumption \ref{asmp: Bounded L1 sensitivity} holds. Consider a stochastic version of the NAG algorithm that runs with a stepsize $\alpha \leq 1/L$ and the momentum parameter $\beta = (1-\sqrt{\alpha \mu})/(1+\sqrt{\alpha \mu})$ and uses noisy gradients $\widetilde{\nabla F(z_{t})} = \nabla F_{B_{t}}(z_{t}) + \eta_{t}$ for $t \geq 0$ as in \eqref{eq: noisy gradient vector} with a subsampling size $m \leq n$ and $\eta_{t, i} \overset{\textup{i.i.d.}}{\sim} \textup{Laplace}(b_{t})$ for all $i = 1, \ldots, d$. Then, for any $t \geq 1$, we have
\begin{equation} \label{eq: backward recursion for error} 
E_{t} \leq (1 - \sqrt{\mu \alpha}) E_{t-1} + \alpha (1 + \alpha L) \left( b_{t}^{2}  d + \sigma^{2}_{s}(m, n)/2 \right).
\end{equation}
\end{thm}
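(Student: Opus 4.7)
The plan is to adapt the proof template of \cite[Theorem 2.3]{aybat2019masg}, which establishes an analogous contraction for stochastic NAG under conditionally unbiased, bounded-second-moment gradient noise, to the particular noise structure arising in DP-NAG.

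First, I would decompose the noisy gradient as $\widetilde{\nabla F(z_{t})} = \nabla F(z_{t}) + w_{t}$ with $w_{t} = v_{t} + \eta_{t}$, where $v_{t} = \nabla F_{B_{t}}(z_{t}) - \nabla F(z_{t})$ is the subsampling error. Three properties of $w_{t}$ drive the argument: (a) $\mathbb{E}[w_{t} \mid z_{t}] = 0$, since the without-replacement subsampling estimator is unbiased and $\eta_{t}$ is zero-mean independent Laplace noise; (b) conditionally on $z_{t}$, $\eta_{t}$ is independent of $v_{t}$, so the second moment decomposes as $\mathbb{E}[\|w_{t}\|^{2} \mid z_{t}] = \mathbb{E}[\|\eta_{t}\|^{2}] + \mathbb{E}[\|v_{t}\|^{2} \mid z_{t}]$; and (c) the Laplace part contributes $\mathbb{E}[\|\eta_{t}\|^{2}] = 2 d b_{t}^{2}$ while Proposition \ref{prop: bound on the variance of subsampling} bounds the subsampling contribution uniformly in $z_{t}$, so that the combined conditional second moment is controlled by a constant multiple of $b_{t}^{2} d + \sigma^{2}_{s}(m,n)/2$, matching the factor that must appear on the right-hand side of the claim.

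Second, I would perform the one-step NAG descent analysis. Starting from the $L$-smoothness inequality applied to the update $x_{t+1} = z_{t} - \alpha \widetilde{\nabla F(z_{t})}$ produces the usual deterministic NAG decrement, a cross term linear in $w_{t}$, and a quadratic remainder involving $\|w_{t}\|^{2}$. Combining this with the strong-convexity inequality at $z_{t}$ and the particular momentum choice $\beta = (1-\sqrt{\alpha\mu})/(1+\sqrt{\alpha\mu})$ reproduces the standard NAG telescoping identity with contraction factor $1 - \sqrt{\mu \alpha}$, exactly as in \cite[Thm 2.3]{aybat2019masg}. Taking the conditional expectation kills the linear cross term by property (a), and plugging in the second-moment bound from property (c) yields the desired recursion after taking the outer expectation to obtain $E_{t}$.

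The main obstacle is pinning down the precise prefactor $\alpha(1+\alpha L)$ in front of the noise term rather than a cruder bound such as $L \alpha^{2}$ or $\alpha$. Obtaining this exact constant requires the careful algebraic combination of the smoothness and strong-convexity inequalities so that the $\|\nabla F(z_{t})\|^{2}$ terms telescope correctly under $\alpha \leq 1/L$; this is precisely the step carried out in \cite{aybat2019masg}. The adaptation to our setting is essentially verbatim once one verifies, as in the first paragraph, that the DP-NAG noise $w_{t}$ satisfies the same conditional-mean-zero and bounded-conditional-second-moment hypotheses that the argument there requires, with the effective noise variance replaced by $2(b_{t}^{2} d + \sigma^{2}_{s}(m,n)/2)$.
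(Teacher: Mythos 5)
Your proposal is correct and follows essentially the same route as the paper, which gives no separate proof but simply adapts \cite[Theorem 2.3]{aybat2019masg}: one checks that the combined noise $w_t=\eta_t+v_t$ is conditionally zero-mean with second moment bounded by $2db_t^2+\sigma_s^2(m,n)=2\bigl(db_t^2+\sigma_s^2(m,n)/2\bigr)$ (Laplace variance $2b_t^2$ per coordinate plus the subsampling bound of Proposition \ref{prop: bound on the variance of subsampling}) and then invokes the cited one-step recursion with contraction factor $1-\sqrt{\mu\alpha}$ and prefactor $\alpha(1+\alpha L)/2$ times the variance. No gaps worth noting.
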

Note that in \eqref{eq: backward recursion for error}, the term $b_{t}^{2} + \frac{S_{1}^{2}}{m} \frac{n-m}{n-1} $ is an upper bound on the norm of the covariance of the gradient estimator, and it simplifies to $b_{t}^{2}$ when $m = n$, i.e., without subsampling. By starting the recursion in \eqref{eq: backward recursion for error} at the last iteration $t = T$ and recursing backward until $t = 0$, we end up with
\[
E_{T} \leq (1 - \sqrt{\mu \alpha})^{T} E_{0} + \sum_{t = 1}^{T} (1 - \sqrt{\mu \alpha})^{T - t} \alpha (1 + \alpha L) \left( b_{t}^{2} d  + \sigma^{2}_{s}(m, n)/2 \right).
\]
It will prove useful later to express the error of NAG generically as
\begin{equation} \label{eq: generic error for NAG algorithms}
E_{T} \leq a_{T, 0} E_{0} + \sum_{t = 1}^{T} a_{T, t} \left( b_{t}^{2} d  + \sigma^{2}_{s}(m, n)/2 \right).
\end{equation}
The $a_{T, t}$ in \eqref{eq: generic error for NAG algorithms} can
be identified as
\[
a_{T, t} = \begin{cases} (1 - \sqrt{\mu \alpha})^{T}, & t = 0; \\
(1 - \sqrt{\mu \alpha})^{T - t} \alpha (1 + \alpha L), & t = 1, \ldots, T. \end{cases}
\]

In the DP framework, we have control on the noise parameters $b_{t}$, with a constraint due to our privacy budget $\epsilon$. Suppose that we are committed to run the algorithm for a total of $T$ iterations. When $b_{t}$ is used, the privacy leak at iteration $t$ becomes $\epsilon_{t} = \varepsilon(S_{1}, b_{t}, n, m)$. Given a desired privacy level $\epsilon$, we have the constraint $\sum_{t = 1}^{T} \epsilon_{t} = \epsilon$, by Theorem \ref{thm: composition}.
%\[
%\sum_{t = 1}^{T} \frac{S_{1}}{n b_{t}} = \epsilon.
%\]
Therefore, one question is, with fixed $m$ and $T$, how we should arrange $b_{t}$ so that the bound in \eqref{eq: generic error for NAG algorithms} is optimized. Factoring our privacy budget into the scene, we have the following constrained optimization problem.
\begin{equation} \label{eq: optimization problem for NAG}
\min_{b_{1}, \ldots b_{T}} \quad \sum_{t = 1}^{T} a_{T, t} b_{t}^{2}, \quad \text{ subject to } \sum_{t = 1}^{T} \varepsilon(S_{1}, b_{t}, n, m) = \epsilon.
\end{equation}
For general $m \neq n$, the constrained optimization problem is analytically intractable and needs a numerical solution. This is due to the non-linearity in $\varepsilon(S_{1}, b_{t}, n, m)$. However, for the special case of $m = n$ (no subsampling), the constraint in \eqref{eq: optimization problem for NAG} simplifies to $\sum_{t = 1}^{T} S_{1}/nb_{t} = \epsilon$, allowing for the following tractable result. (A proof is given in Appendix \ref{sec: Proof of optimal b}.)
\begin{prop} \label{prop: optimal sigma} 
When $m = n$, the optimization problem in \eqref{eq: optimization problem for NAG} is solved by
\begin{equation} \label{eq: optimum variances - generic} 
b_{t} = \frac{\sum_{j = 1}^{T} a_{T, j}^{1/3} }{a_{T, t}^{1/3}} \frac{S_{1}}{n \epsilon}, \quad t = 1, \dots, T.
\end{equation}
\end{prop}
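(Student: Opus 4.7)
The plan is to solve the constrained optimization problem in \eqref{eq: optimization problem for NAG} by Lagrange multipliers, after specializing the constraint to the case $m=n$. Since $\varepsilon(S_1, b_t, n, n) = S_1/(n b_t)$ by the observation following \eqref{eq: epsilon under subsampling}, the constraint reduces to
\[
\sum_{t=1}^{T} \frac{S_1}{n b_t} = \epsilon.
\]
So the problem is to minimize $\sum_t a_{T,t} b_t^2$ over $b_t > 0$ subject to this single affine-in-$1/b_t$ equality.

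Before writing down KKT conditions, I would verify that the problem is convex so that a stationary point is automatically the unique global minimizer. The substitution $u_t = 1/b_t$ with $u_t > 0$ turns the constraint into the affine equation $\sum_t (S_1/n) u_t = \epsilon$, while the objective becomes $\sum_t a_{T,t}/u_t^2$, which is a sum of strictly convex functions on $(0,\infty)$ (the second derivative of $u_t \mapsto a_{T,t} u_t^{-2}$ is $6 a_{T,t} u_t^{-4} > 0$ since each $a_{T,t} > 0$). Hence the problem is a strictly convex program with one linear equality, and the KKT conditions are necessary and sufficient for the global optimum.

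Next I would form the Lagrangian $\mathcal{L}(b,\lambda) = \sum_t a_{T,t} b_t^2 + \lambda\bigl(\sum_t S_1/(n b_t) - \epsilon\bigr)$ and set $\partial \mathcal{L}/\partial b_t = 0$, yielding $2 a_{T,t} b_t = \lambda S_1/(n b_t^2)$, i.e.\
\[
b_t^{3} = \frac{\lambda S_1}{2 n a_{T,t}}, \qquad \text{so} \qquad b_t = \frac{K}{a_{T,t}^{1/3}}
\]
for a common constant $K = \bigl(\lambda S_1/(2n)\bigr)^{1/3}$. Positivity of $b_t$ forces $\lambda > 0$, and $K$ is then pinned down by the constraint: substituting gives
\[
\sum_{t=1}^{T} \frac{S_1 a_{T,t}^{1/3}}{n K} = \epsilon \quad\Longrightarrow\quad K = \frac{S_1 \sum_{j=1}^{T} a_{T,j}^{1/3}}{n \epsilon},
\]
and plugging this back into $b_t = K/a_{T,t}^{1/3}$ reproduces formula \eqref{eq: optimum variances - generic}. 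There is no serious obstacle here; the only thing to watch is keeping signs straight and ensuring $K > 0$, which the convexity argument from the previous paragraph already guarantees. The same template (Lagrange multiplier with cube-root scaling) could in principle be applied when $m < n$, but the nonlinearity of $\varepsilon(S_1, b_t, n, m)$ in $b_t$ prevents a clean closed form, which is why the proposition restricts to the no-subsampling case.
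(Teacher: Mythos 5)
Your proposal is correct and follows essentially the same route as the paper: specialize the constraint to $\sum_{t} S_{1}/(n b_{t}) = \epsilon$, write the Lagrangian, solve the stationarity equations $2 a_{T,t} b_{t} = \lambda S_{1}/(n b_{t}^{2})$ for a common cube-root scaling, and pin down the multiplier via the constraint. Your extra step of substituting $u_{t} = 1/b_{t}$ to exhibit a strictly convex program with an affine constraint is a cleaner certificate of global optimality than the paper's brief bordered-Hessian remark, but it does not change the substance of the argument.
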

We can express the solution \eqref{eq: optimum variances - generic} also in terms of the privacy leak at iteration $t$ as:
\[
\epsilon_{t} = \frac{a_{T, t}^{1/3} }{\sum_{j = 1}^{T} a_{T, j}^{1/3}} \epsilon, \quad t = 1, \dots, T.
\]
Since $a_{T, t}$ is decreasing in $t$, the solution \eqref{eq: optimum variances - generic} suggests that the variance should start high and then should be decreased. This means that the privacy budget should be distributed to the iterations in an unevenly way. A larger part of the privacy budget should be spent for later rather than for early iterations.

\begin{rem} \label{rem: optimize w.r.t T}
The solution in \eqref{eq: optimum variances - generic} for $m = n$ yields the optimum bound
\begin{equation} \label{eq: optimized bound}
E_{T} \leq a_{T, 0} E_{0} + \frac{d S_{1}^{2}}{n^{2} \epsilon^{2}} \left(\sum_{j = 1}^{T} a_{T, j}^{1/3} \right)^{3},
\end{equation}
which could further be optimized with respect to the number of iterations, provided that one has an accurate guess on the initial error $E_{0}$. We note that increasing the number of iterations may degrade the performance in the DP context, since the required noise per iteration increases unlike the deterministic setting where one may improve the performance monotonically as the number of iterations grows.
\end{rem}

\begin{rem} \label{rem: optimize under subsampling}
Although the result in Proposition \eqref{prop: optimal sigma} is valid for no subsampling, it can be used as a guide for arranging $b_{t}$'s even under subsampling. Note that for values of $m$, $n$, $S$, and $b$ such that  $m \ll n$ and $S/bm \ll 1$, we have $\varepsilon(S, b, m, n) \approx S/bn$, owing to the approximation $e^{z} \approx 1 + z$ for $z \ll 1$.
\end{rem}

\subsection{Multi-stage NAG}
\label{se: Multi-stage NAG}
An alternative for improving the performance of NAG is to make the stepsize vary with iterations. In fact, the MASG algorithm of \cite{aybat2019masg} has been proposed with that motivation. The authors prove that MASG achieves optimal rate both in deterministic and stochastic versions.

In this paper, we present a DP-MASG, a differentially private version of MASG introduced in \cite{aybat2019masg}. In order to study and improve the error behavior of the algorithm, an explicit bound for the objective error that accommodates iteration dependent noise variance parameter $b_{t}$ is presented. We demonstrate that the approach of dividing noise into iterations can be applied to MASG as well.

The original algorithm MASG is a multistage accelerated algorithm which uses Nesterov's accelerated gradient method with noisy full gradient.  The total iterations $T$ are divided into $K$ stages, with stage lengths $n_{k}$, and for each stage a different stepsize $\alpha^{(k)}$ is used. For the optimal convergence rate, the stage lengths and the corresponding stepsizes are recommended in \cite{aybat2019masg} as
\begin{equation} \label{eq: MAG stepsize}
n_1 \geq 1,  \quad  \alpha^{(1)} = \frac{1}{L}, \quad n_k = 2^{k} \left \lceil \sqrt{\kappa}\ln(2^{p+2}) \right \rceil , \quad \alpha^{(k)} = \frac{1}{2^{2k}L}, \quad k \geq 2,
\end{equation}
where $p \geq 1$. 

The MASG algorithm can easily be modified to be differential private by adding a Laplace noise to the gradient as in \eqref{eq: bound on  the covariance of the overall noise}. We will refer to the resulting algorithm as DP-MASG. The selections in \eqref{eq: MAG stepsize} for the stage lengths and the stepsizes were designed for constant noise variance per iteration. In the following, we will instead propose a new version that uses a variable noise variance parameter $b_{t}$ at iteration $t$, which can improve performance. The main idea is to rely on Proposition \ref{prop: optimal sigma} to optimize over $b_{t}$'s with the privacy budget constraint.

In order to study how the privacy noise can be optimally distributed to the iterations of DP-MASG, we provide an explicit bound that not only accommodates iteration-dependent noise variance, but also is in the same form as \eqref{eq: generic error for NAG algorithms} so that the noise variances can be optimized to minimize the bound.  For MASG, stepsizes change across stages, therefore, the recursion in \eqref{eq: backward recursion for error} cannot be applied for all iterations. Instead, by Lemma 3.3 of \cite{aybat2019masg}, we have a factor of two that appears when the algorithm transitions from one stage to the next. This leads to the following theorem.
\begin{thm} \label{thm: DP-MASG} Let $F \in \mathcal{S}_{\mu, L}(\RR^d)$. Consider the DP-MASG algorithm with stage lengths  $n_{k}$ and step-sizes during those stages $\alpha^{(k)}$ given as in \eqref{eq: MAG stepsize}, and with noisy gradients $\nabla f(x_{t}) + \eta_{t}$, where $\eta_{t, i} \overset{\textup{i.i.d.}}{\sim} \textup{Laplace}(b_{t})$ for $i = 1, \ldots, d$. Then,
\begin{equation} \label{eq: upper bound for error} 
\begin{split}
E_{T} =& \left[ 2^{s_{T}-s_{0}} \prod_{i = 1}^{T} (1 - \sqrt{\mu \alpha^{(s_{i})}}) \right] E_{0} \\
 & \quad\quad\quad\quad + \sum_{t = 1}^{T} 2^{s_{T}-s_{t}} \left[ \prod_{i = t+1}^{T} (1 - \sqrt{\mu \alpha^{(s_{i})}}) \right] \alpha^{(s_{t})} (1 + \alpha^{(s_{t})} L)\left( b_{t}^{2} d +  \sigma^{2}_{s}(m, n)/2 \right),
\end{split}
\end{equation}
where $s_{i}$ is the stage that contains iteration $i$, provided that $\alpha^{(k)} \leq 1/L$ for all $k \geq 1$.
\end{thm}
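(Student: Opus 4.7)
The plan is to combine the single-stage recursion \eqref{eq: backward recursion for error}, which holds whenever the stepsize is fixed and the momentum parameter is chosen as $\beta = (1-\sqrt{\alpha\mu})/(1+\sqrt{\alpha\mu})$, with the ``factor-of-two'' stage-transition bound from Lemma 3.3 of \cite{aybat2019masg}. Since within a single stage $k$ DP-MASG is exactly stochastic NAG with stepsize $\alpha^{(k)}$ and noisy gradients whose components have variance $2b_t^2 + \sigma_s^2(m,n)$, I can apply \eqref{eq: backward recursion for error} iteration by iteration inside each stage.

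Concretely, first I fix a stage $k$ with iterations $t \in \{\tau_{k-1}+1,\ldots,\tau_k\}$ (where $\tau_k = n_1+\cdots+n_k$) and unroll \eqref{eq: backward recursion for error} from $t = \tau_k$ back to $t = \tau_{k-1}+1$. This gives
\begin{equation*}
E_{\tau_k} \leq \Bigl[\prod_{i=\tau_{k-1}+1}^{\tau_k}(1-\sqrt{\mu\alpha^{(k)}})\Bigr] E_{\tau_{k-1}}^{+} + \sum_{t=\tau_{k-1}+1}^{\tau_k}\Bigl[\prod_{i=t+1}^{\tau_k}(1-\sqrt{\mu\alpha^{(k)}})\Bigr]\alpha^{(k)}(1+\alpha^{(k)}L)\bigl(b_t^2 d+\sigma_s^2(m,n)/2\bigr),
\end{equation*}
where $E_{\tau_{k-1}}^{+}$ denotes the initial error of stage $k$ after the restart. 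By Lemma 3.3 of \cite{aybat2019masg}, the restart at the beginning of stage $k$ inflates the error by at most a factor of two, i.e.\ $E_{\tau_{k-1}}^{+} \leq 2\, E_{\tau_{k-1}}$. Since every factor $(1-\sqrt{\mu\alpha^{(k)}})$ inside stage $k$ coincides with $(1-\sqrt{\mu\alpha^{(s_i)}})$ for the corresponding $i$, this within-stage inequality can be rewritten using the stage-indexing notation $s_i$.

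Next I chain these per-stage inequalities from stage $1$ through stage $K$. Substituting the bound for $E_{\tau_{k-1}}$ into the bound for $E_{\tau_k}$, a factor of $2$ appears for every stage boundary crossed. Between iteration $t$ (in stage $s_t$) and the final iteration $T$ (in stage $s_T$) the number of such crossings is exactly $s_T - s_t$, which produces the multiplicative prefactor $2^{s_T-s_t}$ in front of the noise-induced term at iteration $t$, and $2^{s_T-s_0}$ in front of $E_0$. The products $\prod_{i=t+1}^{T}(1-\sqrt{\mu\alpha^{(s_i)}})$ and $\prod_{i=1}^{T}(1-\sqrt{\mu\alpha^{(s_i)}})$ emerge by concatenating the intra-stage contraction products along the iterations they span, using the fact that when $i$ crosses into a new stage the appropriate $\alpha^{(s_i)}$ is automatically selected by the index $s_i$. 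Combining the noise term with the inflated restart terms yields exactly \eqref{eq: upper bound for error}.

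The main obstacle is bookkeeping: one must verify that the stage-transition factor of $2$ from Lemma 3.3 is compatible with the one-step recursion \eqref{eq: backward recursion for error} (in particular that the restart does not introduce any additional additive noise term beyond the $b_t^2 d + \sigma_s^2(m,n)/2$ already accounted for), and that the condition $\alpha^{(k)} \leq 1/L$ for all $k$ is exactly what is needed to apply \eqref{eq: backward recursion for error} in every stage. The rest is a clean telescoping of products and sums, reindexed through $s_t$.
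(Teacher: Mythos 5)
Your proposal is correct and follows essentially the same route the paper takes: unrolling the single-stage recursion \eqref{eq: backward recursion for error} within each stage (valid since $\alpha^{(k)} \leq 1/L$ and the stage-wise momentum choice makes DP-MASG a stochastic NAG per stage) and inserting the factor-of-two restart bound from Lemma 3.3 of \cite{aybat2019masg} at each stage boundary, which telescopes to the $2^{s_T-s_t}$ prefactors. Note only that the chained argument yields an inequality, so the ``$=$'' in \eqref{eq: upper bound for error} should be read as ``$\leq$'', exactly as your derivation produces.
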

Observing that the bound in \eqref{eq: upper bound for error} is in the same form as \eqref{eq: generic error for NAG algorithms}, $b_{t}$ can be optimized as in \eqref{eq: optimization problem for NAG} but with $a_{T, t}$ indicated by \eqref{eq: upper bound for error} as
\[
a_{T, t} = 2^{s_{T}-s_{t}} \left[ \prod_{i = t+1}^{T} (1 - \sqrt{\mu \alpha^{(s_{i})}}) \right] \alpha^{(s_{t})} (1 + \alpha^{(s_{t})} L), \quad t = 1, \ldots, T.
\]
Once again, the optimal $b_{t}$'s when $m = n$ can be written in terms of $\epsilon$, $S_{1}$, and $a_{T, t}$'s as in \eqref{eq: optimum variances - generic}. To show the effect of algorithm parameters on noise variance, we plot the optimum $b_{t}$ values in Figure \ref{fig:optbt} for $\mu = 1$, $L = 20$,
$\kappa = 20$, $p = 1$, and $c_{1} = 1$, representing the
constant factor in front of the stepsize.
\begin{figure}[hbt]
\centerline{
\includegraphics[scale = 0.75]{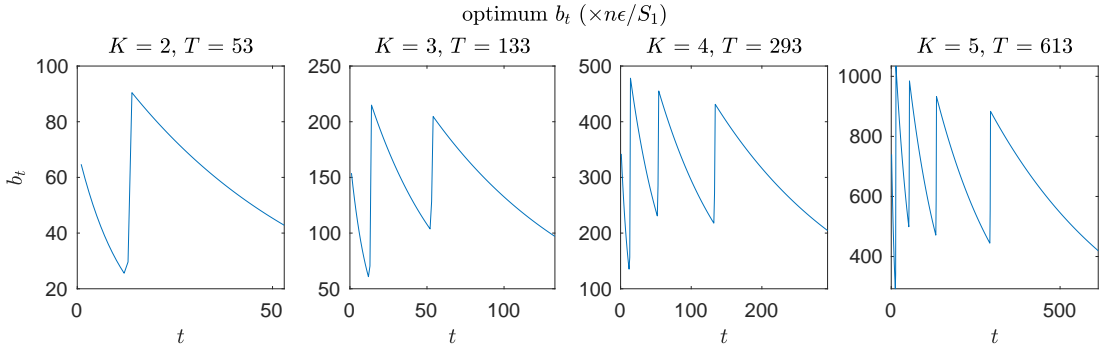}
}
\caption{Optimal $b_{t}$ values for the multi-stage NAG algorithm.}
\label{fig:optbt}
\end{figure}

\section{Experimental results} \label{sec: Experimental results}

Our experiments concern a regularized logistic regression problem.\footnote{The results are produced with the code at \url{https://github.com/sibirbil/DPAccGradMethods}} The model has observations $y_{i} = (u_{i}, z_{i})$, $i = 1, \ldots, n$, where $u_{i} \in \mathcal{U} \subseteq \mathbb{R}^{d}$ is a vector of covariates and $z_{i} \in \{ -1, 1\}$ is a binary response whose conditional probability given $u_{i}$ depends on a parameter vector $x \in \mathbb{R}^{d}$ as follows:
\[
p(z_{i} | u_{i}, x) = \left[1 + e^{-z_{i} u_{i}^{\top} x}\right]^{-1}, \quad i = 1, \ldots, n.
\]
Since the probability distribution of $u_{i}$'s does not depend on $x$, the (regularized) maximum likelihood problem is defined as determining
\begin{align} \label{eq:problem_det} x^{\ast} = \arg\max_{x \in \RR^{d}} \frac{1}{n}\sum_{i = 1}^{n} f(x; u_{i}, z_{i}),
\end{align}
where $f(x; u_{i}, z_i):= \ln p(z_{i} | u_{i}, x) + \lambda \norm{x}^2$. One can verify that $S_{1}(x) = 2 \sup_{u \in \mathcal{U}} \| u \|_{1}$ for all $x \in \mathbb{R}^{d}$, upon observing that, for all $u, u' \in \mathcal{U}$, and $z, z' \in \{0,  1\}^{2}$, we have
\begin{align}
  \| \nabla f(x; u, z) - \nabla f(x; u', z') \|_{1} &= \left\| \frac{z u e^{z u^{\top} x}}{1 + e^{z u^{\top} x}} - \frac{z' u^{\prime} e^{z u^{\prime \top} x}}{1 + e^{z' u^{\prime \top}  x}}  \right\|_{1} \nonumber \\
& \leq  \| u \|_{1} \left\vert \frac{z e^{z u^{\top} x}}{1 + e^{z u^{\top} x}} \right\vert + \| u' \|_{1} \left\vert \frac{z'  e^{z u^{\prime \top} x}}{1 + e^{z' u^{\prime \top} x}}  \right\vert \nonumber  \\
& \leq \| u \|_{1} + \| u' \|_{1}. \label{eq: sensitivity_logreg}
\end{align}

For the experiments to follow, we use a synthetic data with $d = 20$ and $n = 10^{5}$ and the value of regularization parameter $\lambda$ is taken as $0.01$. The set $\mathcal{U}$ is taken as the set of all $d \times 1$ real-valued vectors with an $L_{1}$-norm less than equal to $20$. Hence, Assumption \ref{asmp: Bounded L1 sensitivity} holds for this example with $S_{1} = 2 \times 20$. We set $\mu = 2 \times \lambda$ and $L$ is estimated as the largest singular value of $\frac{1}{n}(U^{\top}U) + 2 \lambda I_{d}$, where $U$ is the $n \times d$ matrix with $u_{t}$ being its column $t$.

In our experiments, we compared six differentially private algorithms. The first four, DP-GD, DP-NAG, DP-MASG and DP-HB are the straightforward differentially private versions of GD, NAG, MASG and HB, respectively. The last two algorithms in the comparison are named DP-NAG-opt and DP-MASG-opt, who stand for the alterations of DP-NAG and DP-MAGS for which the privacy preserving noise is distributed to the iterations according to Proposition \ref{prop: optimal sigma}. 

The algorithms are compared across different values of $m$, $T$, and $c$, where $m$ is the subsampling size, $T$ is the number of iterations, and $c$ determines the step size as in $\alpha = c/L$. For DP-MASG and DP-MASG-opt,  the general stepsize formulation in \eqref{eq: MAG stepsize}, presented for the original versions, is preserved; however the stepsizes are scaled by $c$. We tried all the combinations $(m, T, c)$ of $m = 10^{3}, 10^{5}$, $T = 100, 200, 500,  1000$, and $c = 0.1, 1$. We fixed $\epsilon = 1$ throughout the whole experiments. 

For DP-NAG-opt and DP-MASG-opt, we also adjusted the given value of $T$ as follows: With an initial guess of $E_{0} = 10$, we computed the bound in \eqref{eq: optimized bound} for each $T' \leq T$, and we decided the number of iterations to be that $T'$ that gives the minimum bound. This procedure was detailed in Remark \ref{rem: optimize w.r.t T}.

Figures \ref{fig: comp_all_m_1000} and \ref{fig: comp_all_m_100000} show, respectively for $m = 100000$ (no subsampling) and for $m = 1000$, the performances of the algorithms for the tried values of $c$ and $T$. Each subfigure shows the log-difference between the objective function evaluated at the current iterate $F(x_{t})$, and the objective function evaluated at the optimum solution $F(x^{\ast})$. The optimum solution was found with a non-private NAG algorithm that is run for 1000 iterations and without subsampling. The plotted values are the averages from 20 independent runs for each combination of $(m, T, c)$. Trace plots of the iterates for the different values of $T$ are plotted together with different colors. Note that for some cases plots overlap, leading some colors invisible.

\begin{figure}[!ht]
\begin{center}
  \includegraphics[scale = 0.70]{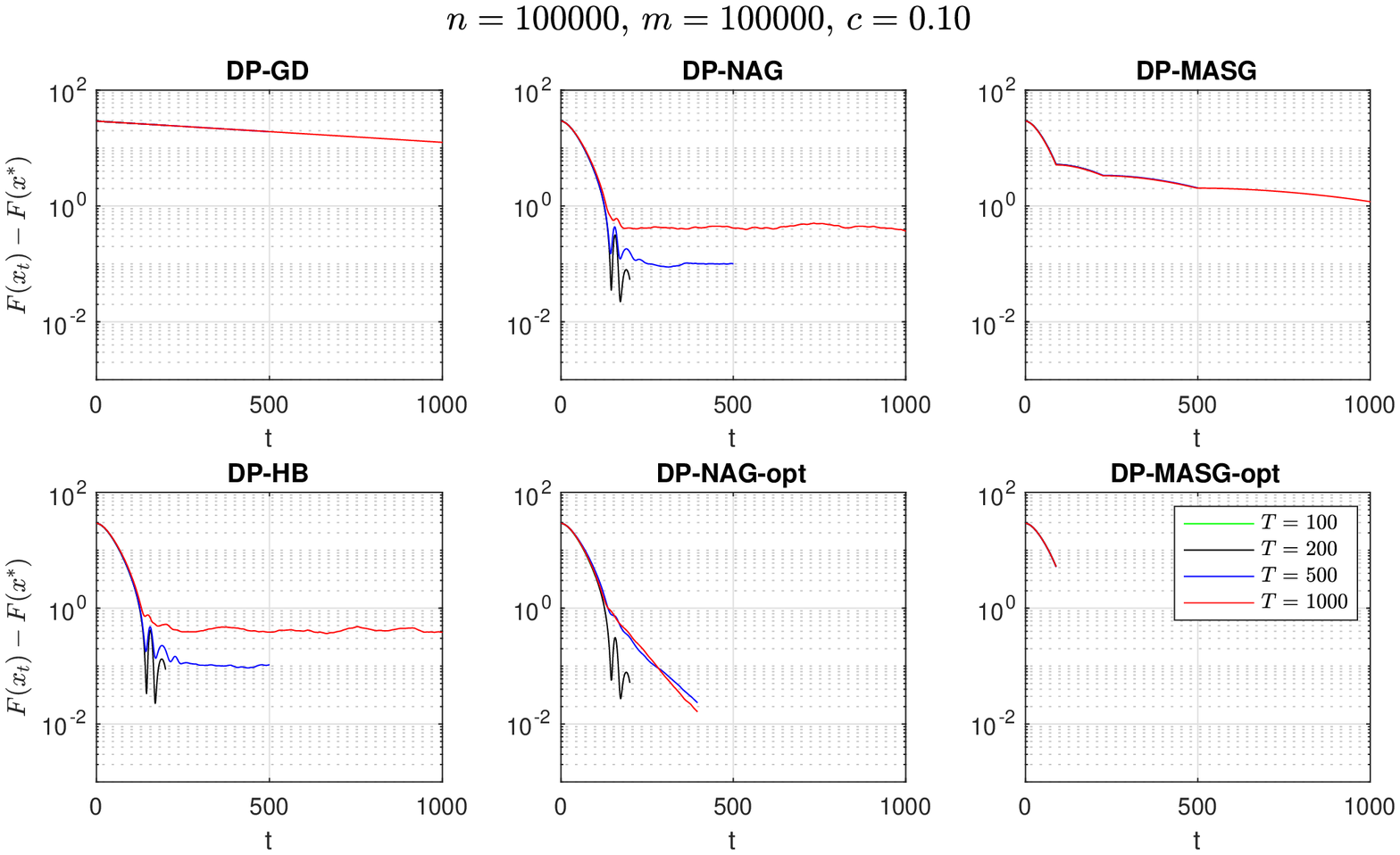} \\
  \includegraphics[scale = 0.70]{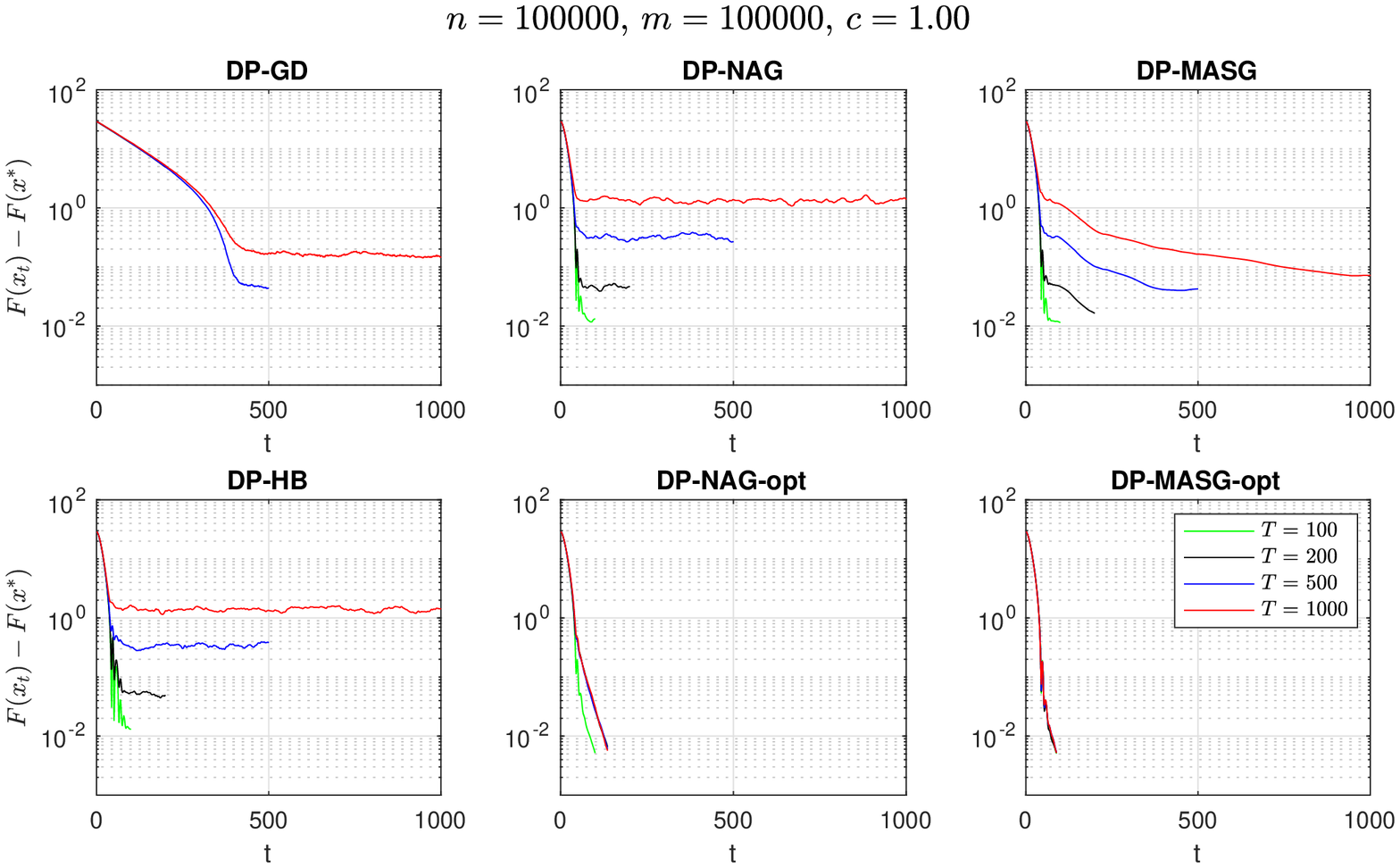}
\end{center}
\caption{Errors with various $T$ and $c$ and without subsampling. Top: $c = 0.1$, Bottom: $c = 1$.}
\label{fig: comp_all_m_1000}
\end{figure}
\begin{figure}[!ht]
\begin{center}
  \includegraphics[scale = 0.70]{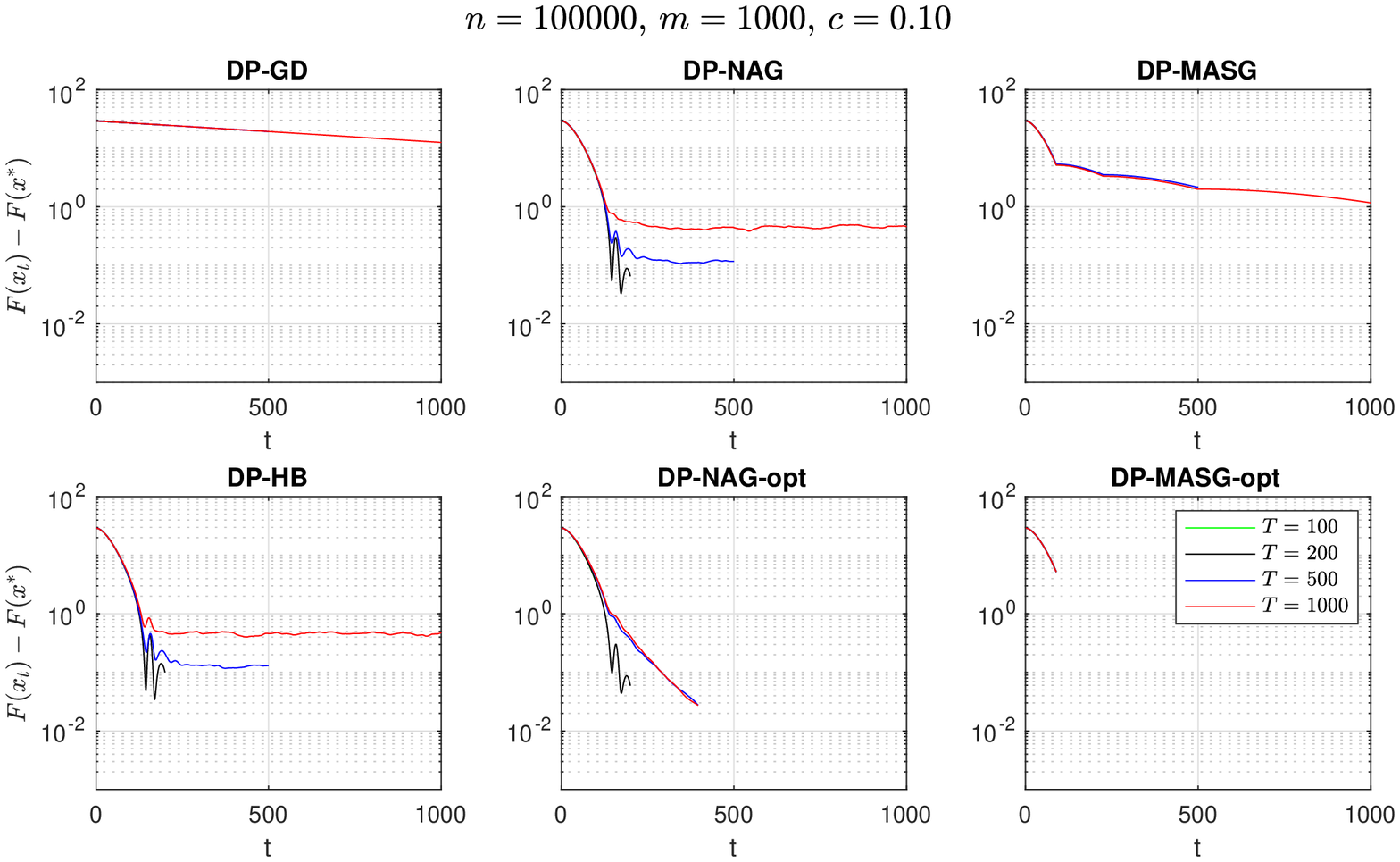} \\
  \includegraphics[scale = 0.70]{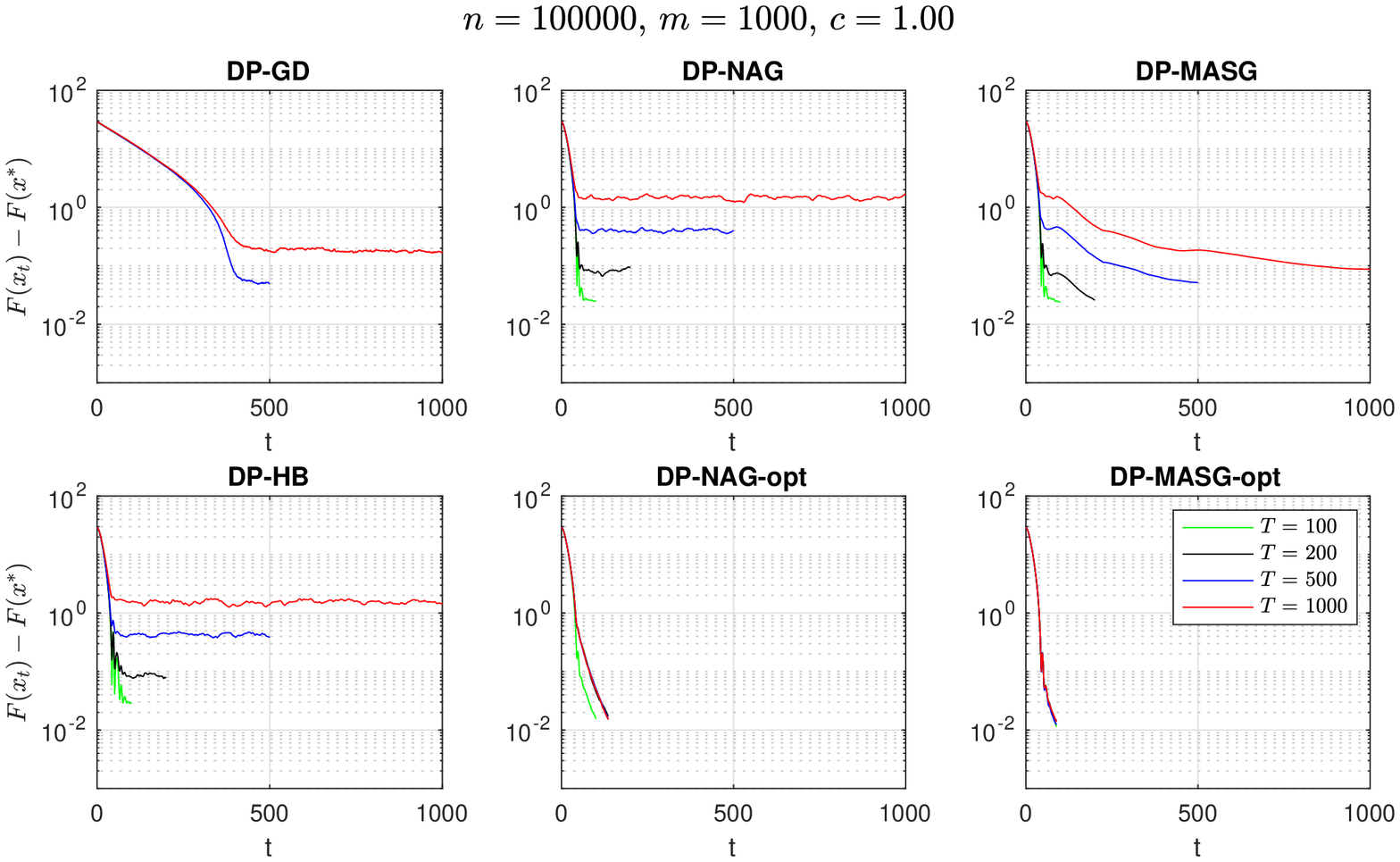}
\end{center}
\caption{Errors with various $T$ and $c$ and with $m = 10^{3}$. Top: $c = 0.1$, Bottom: $c = 1$.}
\label{fig: comp_all_m_100000}
\end{figure}

Comparing DP-GD against the accelerated algorithms, we observe that the accelerated algorithms, DP-HB, DP-NAG, and DP-NAG-opt, outperform DP-SGD. Furthermore, among the accelerated algorithms, we have the best results with DP-NAG-opt and DP-MASG-opt. The advantage of accelerating is more striking for $c = 0.1$, representing a too small value for the stepsize. While DP-DG is dramatically slow with a too small stepsize, the accelerated algorithms DP-HB, and DP-NAG, and DP-NAG-opt seem to suffer less from that ill choice for the stepsize. When $c = 1$, DP-GD recovers from slow convergence, however the accelerated algorithms are still able to beat it. Our observations hold both for $m = 100000$ and for $m = 1000$. The multistage algorithm DP-MASG is also prone to a small value for $c$; but it recovers dramatically when $c = 1$ as recommended in the original work \cite{aybat2019masg}.

In all instances, we can see the advantage of accelerated algorithms in the speed of convergence. However, when we compare the the error levels that the algorithms have reached for the same $T$, we see that sometimes DP-GD has a better performance over DP-NAG or DP-HB. See, for example, the lower half of Figure \ref{fig: comp_all_m_1000}, at $T = 1000$ (red line): while DP-GD converged more slowly than DP-HB and DP-NAG, it reached a smaller error level. However, if we conduct an \emph{overall} comparison between DP-GD and DP-HB in terms of their best performances among all the choices $T = 100, 200, 500, 1000$, we see that the best of DP-HB (at $T = 100$) outperforms the best of DP-GD (at $T = 1000$). This observation is repeated in our experiments and is suggestive of a general recommendation: The accelerated algorithms can promise faster convergence when used with a small number of iterations.

This general recommendation about the selection of $T$ is further supported by the traces belonging to the DP-NAG-opt and DP-MASG-opt (when $c = 1$), where $T$ is re-adjusted according to \eqref{eq: optimized bound}. We can see from the subplots belonging to  DP-NAG-opt and DP-MASG-opt that, the re-adjustment prefers small $T$, and this selection indeed improves the performance. This further justifies the use of the optimized algorithms DP-NAG-opt and DP-MASG-opt, where the distribution of privacy preserving variance as well as the number of iterations are chosen automatically.

We also compare between the NAG-based schemes and their multi-stage versions. When the stepsize is chosen properly (\textit{c.f.} $c = 1$), both DP-NAG-opt and DP-MASG-opt perform very closely and outperform the others. However, DP-NAG-opt seems more robust to a poor selection of the stepsize (exampled by $c = 0.1$).
 
Finally, we compare the selections $m = 1000$ and $m = 100000$, where the first one corresponds to subsampling (with a rate of $1\%$), and the other corresponds to no subsampling. Firstly, we can see that, even when we subsample, optimizing $b_{t}$'s and $T$ according to Proposition \ref{prop: optimal sigma} does improve the performance of DP-NAG and DP-MASG significantly. (Recall that Proposition \ref{prop: optimal sigma} holds under no subsampling, yet its use under subsampling was discussed in Remark \ref{rem: optimize under subsampling}). Secondly, a comparison of Figures \ref{fig: comp_all_m_1000} and \ref{fig: comp_all_m_100000} on the whole shows that using full data improves the performance of the accelerated algorithms, especially for $c = 1$ (compare the lower halves of the figures). However, the difference does not seem to be by an order of magnitude. Since the additional randomness introduced by subsampling helps to decrease the required noise level for DP and using a sample instead of full data at each iteration is faster (in terms of per iteration running time), many DP methods in the literature consider stochastic algorithms in the DP context, where stochastic algorithms can improve the running time compared to deterministic algorithms. However, if the running time is not of concern for reaching a given privacy level, our experiments show that using full data results in a smaller bound on the objective error.

\section{Conclusions} \label{sec:conclusions}
In this paper, we presented two classes of differentially private optimization algorithms based on momentum averaging based on the heavy ball method and Nesterov's accelerated gradient method. We provided performance bounds for our algorithms for a given iteration budget while preserving a desired privacy level depending on the choice of the parameters (stepsize and the momentum). We showed that, for NAG, homogenous distribution of the privacy budget over all iterations, as typically done in the literature so far, is not the best way, and we propose a method to improve it.  Numerical experiments showed that the presented algorithms have advantages over their well-known straightforward versions.

Our analysis and methodology can be adapted to other forms of privacy to certain extents. For this, existence of a tractable formula for the noise parameter to satisfy a certain level of privacy is the key requirement. For example, a weaker form of $(\epsilon, \delta)$ differential privacy can be satisfied if the normal distribution is used for the privacy preserving noise, and, the required noise variance is well known \cite{dwork2014algorithmic}. Furthermore, provided no subsampling, privacy loss can be optimally distributed to the iterations of DP-NAG using a closed from formula as in Proposition \ref{prop: optimal sigma}, by exploiting the relation between zero-concentrated differential privacy of \cite{Bun_and_Steinke_2016} and $(\epsilon, \delta)$ differential privacy.

Our theoretical work formally investigates, both for DP-HB, DP-NAG, and the multi-stage version of DP-NAG, the effect of the algorithm parameters on the error bound. For DP-NAG and its multi-stage version, we also provide explicit formulas about how the variance of the gradient noise should be tuned at each stage to preserve a certain given level of privacy requirement, provided %with a bound on the initial error and 
the choice for the total number of iterations. % \comment{Burda initial error u bilmeden bu arguman  calisiyor muydu?} %For HB, 
%Methodologically, 
%For the multi-stager version of NAG, we also studied how the variance of the gradient noise can be tuned over stages to achieve the best performance within our performance bounds. 
However, in our setup, tuning of these parameters require knowledge about the constants $\mu$ and $L$. The Lipschitz constant $L$ can often be estimated from data using line search techniques (see e.g. \cite[Alg. 2]{schmidt2015non} or \cite{beck2009fast}). The strong convexity constant $\mu$ may also be known in some cases, for instance if a regularization term $\lambda \frac{\|x\|^2}{2}$ with $\lambda>0$ is added to a convex empirical risk minimization problem of the form \eqref{eq-finite-sum}, the strong convexity constant $\mu$ can be taken as $\lambda$. However, in general, $\mu$ may not be known and it may need to be estimated from data. As part of future work, it would be interesting to investigate whether restarting techniques developed for accelerated deterministic algorithms such as \cite{fercoq} which do not require the knowledge of the strong convexity constant a priori can be adapted to the privacy setting.

\appendix
\section{Definitions and Known Results} \label{app:0}

\begin{defn}[Strongly convex and smooth functions] \label{defn: st_convex}
A continuously differentiable function $F: \mathbb{R}^{d} \mapsto \mathbb{R}$ is called strongly convex with modulus $\mu > 0$ and L-smooth with a Lipschitz constant $L > 0$, if it satisfies
\[
\frac{\mu}{2}\norm{x-y}^2 \leq F(x) - F(y) -\nabla F(y)^{\top} (x-y) \leq \frac{L}{2}\norm{x-y}^{2}, \quad \forall x, y \in \mathbb{R}^{d}.
\]
The inequalities on the left and right hand sides separately define strong convexity and $L$-smoothness, respectively. Moreover, $\mathcal{S}_{\mu,L}(\RR^d)$ denotes the set of continuously differentiable functions that are strongly convex with modulus $\mu$ and L-smooth with $L$.
\end{defn}

\begin{defn}[$\epsilon$- Differential Privacy] \label{defn:DP} \textup{(Definition 1, \cite{Dwork_2008})} 
A randomized algorithm $\mathcal{A}$ with set of input datasets $\mathcal{Y}$ and range for its output $\mathcal{X}$ is $\epsilon$- differential private if for all datasets $Y, Y' \in \mathcal{Y}$ differing on at most one element i.e.\ $h(Y, Y') \leq 1$, and all measurable $O \subseteq \mathcal{X}$, it holds that $\mathbb{P}[A_{Y} \in O] \leq e^{\epsilon} \mathbb{P}[A_{Y'} \in O]$.
% \begin{align} \label{eq: defn_DP}
% \mathbb{P}[A_{Y} \in O] \leq e^{\epsilon} \mathbb{P}[A_{Y'} \in O].
% \end{align}
\end{defn}

\begin{defn}[Sensitivity]
\label{defn:Sensitivity} \textup{(Definition 3.1, \cite{dwork2014algorithmic})} 
For a function on datasets $\varphi:\mathcal{Y} \mapsto \mathbb{R}^{k}$, $k \geq 1$, the $L_{1}$-sensitivity of $\varphi$ is defined as
\begin{align} \label{eq: Sensitivity}
S_{1}^{\varphi} = \max_{Y, Y' \in \mathcal{Y}:  h(Y, Y') = 1} || \varphi(Y) - \varphi(Y') ||_{1}.
\end{align}
\end{defn}

\begin{thm}[Laplace mechanism] \label{thm: Laplace mechanism} \textup{(Theorem 1, \cite{Dwork_2008})} 
Given function $ \varphi :\mathcal{Y} \mapsto \mathbb{R}^{k}$, the mechanism $\mathcal{A}_{\varphi}$, which adds independently generated noise with Laplace distribution $\textup{Lap}(S_{1}^{\varphi} /\epsilon)$ to each of the $k$ output terms, is $\epsilon$-differentially private.
\end{thm}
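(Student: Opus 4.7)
The plan is to verify the $\epsilon$-differential privacy condition in Definition \ref{defn:DP} directly from the density of the output. Let $Y, Y' \in \mathcal{Y}$ with $h(Y,Y') \leq 1$, write $b = S_{1}^{\varphi}/\epsilon$, and denote by $p_{Y}$ the density of $\mathcal{A}_{\varphi}(Y) = \varphi(Y) + \eta$ on $\mathbb{R}^{k}$, where $\eta = (\eta_{1}, \ldots, \eta_{k})$ has i.i.d.\ $\textup{Lap}(b)$ components. Because the Laplace density is $\frac{1}{2b}e^{-|z|/b}$, the joint density factorizes as $p_{Y}(o) = \prod_{i=1}^{k}\frac{1}{2b}\exp(-|o_{i}-\varphi(Y)_{i}|/b)$.

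Next, I would form the pointwise likelihood ratio and bound it. For any $o \in \mathbb{R}^{k}$,
\begin{equation*}
\frac{p_{Y}(o)}{p_{Y'}(o)} = \exp\!\left(\frac{1}{b}\sum_{i=1}^{k}\bigl(|o_{i}-\varphi(Y')_{i}| - |o_{i}-\varphi(Y)_{i}|\bigr)\right).
\end{equation*}
The reverse triangle inequality $|o_{i}-\varphi(Y')_{i}| - |o_{i}-\varphi(Y)_{i}| \leq |\varphi(Y)_{i}-\varphi(Y')_{i}|$ applied coordinatewise, together with the definition of the $L_{1}$ sensitivity and the neighboring hypothesis $h(Y,Y')=1$, gives $\sum_{i=1}^{k}|\varphi(Y)_{i}-\varphi(Y')_{i}| = \|\varphi(Y)-\varphi(Y')\|_{1} \leq S_{1}^{\varphi}$. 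Substituting back yields $p_{Y}(o)/p_{Y'}(o) \leq \exp(S_{1}^{\varphi}/b) = e^{\epsilon}$.

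Finally, to conclude the DP condition for sets rather than points, I would integrate the pointwise inequality over an arbitrary measurable $O \subseteq \mathbb{R}^{k}$:
\begin{equation*}
\mathbb{P}[\mathcal{A}_{\varphi}(Y) \in O] = \int_{O} p_{Y}(o)\,do \leq e^{\epsilon}\int_{O} p_{Y'}(o)\,do = e^{\epsilon}\,\mathbb{P}[\mathcal{A}_{\varphi}(Y') \in O],
\end{equation*}
which matches Definition \ref{defn:DP}. The argument is entirely symmetric in $Y$ and $Y'$, so there is no separate case to handle.

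There is no real obstacle here; the only subtlety worth flagging in the writeup is the use of the reverse triangle inequality coordinatewise and its tight coupling to the $L_{1}$ (rather than $L_{2}$ or $L_{\infty}$) notion of sensitivity. This is precisely why Laplace noise scales with the $L_{1}$ sensitivity, and why the exponent collapses cleanly to $S_{1}^{\varphi}/b = \epsilon$. The resulting bound is independent of $k$, which is the key feature that makes the mechanism useful in the high-dimensional setting that will be invoked throughout the paper.
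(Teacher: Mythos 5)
Your proof is correct and is the standard argument for the Laplace mechanism: factorize the product density, bound the pointwise likelihood ratio via the triangle inequality coordinatewise, invoke the $L_{1}$ sensitivity, and integrate over measurable sets. The paper itself gives no proof of this statement --- it is quoted as a known result (Theorem 1 of the cited Dwork reference) --- so there is nothing to diverge from; your writeup matches the canonical proof of that cited theorem.
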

The methods that we shall present in the subsequent sections use the Laplace mechanism at every iteration. Thus, we further need to quantify the privacy loss due to using a randomized algorithm repeatedly. 
\begin{thm}[Composition]\cite[Corollary 3.15]{dwork2014algorithmic} \label{thm: composition}
Let each algorithm $\mathcal{A}_{i}$ is $\epsilon_{i}$ differentially private. Then $(\mathcal{A}_{1}, \ldots, \mathcal{A}_{T})$, whose output is the concatenation of the outputs of the individual algorithms, is $\sum_{i=1}^{T} \epsilon_{i}$ differentially private.
\end{thm}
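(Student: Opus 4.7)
The plan is to prove the bound directly from the definition of $\epsilon$-differential privacy, working with the joint output of the concatenated mechanism. Fix two neighboring datasets $Y, Y' \in \mathcal{Y}$ with $h(Y, Y') \leq 1$, and let $A_{i,Y}$ denote the (random) output of $\mathcal{A}_i$ when applied to $Y$, taking values in $\mathcal{X}_i$. The goal is to show that for every measurable $O \subseteq \mathcal{X}_1 \times \cdots \times \mathcal{X}_T$,
\[
  \mathbb{P}\bigl[(A_{1,Y}, \ldots, A_{T,Y}) \in O\bigr] \le e^{\sum_{i=1}^T \epsilon_i}\,\mathbb{P}\bigl[(A_{1,Y'}, \ldots, A_{T,Y'}) \in O\bigr].
\]

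First, I would reduce the statement to a pointwise bound on the joint density (or Radon-Nikodym derivative) of the outputs. Since the mechanisms $\mathcal{A}_1,\ldots,\mathcal{A}_T$ use fresh independent randomness given the dataset, the distribution of $(A_{1,Y},\ldots,A_{T,Y})$ factorizes as a product of the marginal distributions of each $A_{i,Y}$. Writing $\mu_{i,Y}$ for the distribution of $A_{i,Y}$ on $\mathcal{X}_i$, I would let $\nu_i$ be a common dominating measure (so the Radon-Nikodym derivatives $p_{i,Y} = d\mu_{i,Y}/d\nu_i$ exist) and invoke the equivalent characterization of $\epsilon_i$-DP as the pointwise bound $p_{i,Y}(o_i) \le e^{\epsilon_i} p_{i,Y'}(o_i)$ holding $\nu_i$-almost everywhere.

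Second, I would multiply these per-mechanism bounds across $i=1,\ldots,T$:
\[
  \prod_{i=1}^{T} p_{i,Y}(o_i) \;\le\; \prod_{i=1}^{T} e^{\epsilon_i} p_{i,Y'}(o_i) \;=\; e^{\sum_{i=1}^T \epsilon_i} \prod_{i=1}^{T} p_{i,Y'}(o_i).
\]
Integrating this inequality over an arbitrary measurable $O$ with respect to the product measure $\nu_1\otimes\cdots\otimes\nu_T$ and using Fubini yields exactly the inequality required by Definition~\ref{defn:DP} applied to the tuple $(\mathcal{A}_1,\ldots,\mathcal{A}_T)$, with privacy parameter $\sum_i \epsilon_i$.

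The main obstacle is the measure-theoretic setup: the pointwise bound on densities requires a common dominating measure, and one has to be careful that the $\epsilon_i$-DP guarantee, which is stated for measurable sets, transfers to a likelihood-ratio bound. An equivalent and cleaner route is to express $\epsilon$-DP via the max-divergence $D_\infty(\mu_{i,Y}\|\mu_{i,Y'}) \le \epsilon_i$ and use its additivity under independent products, namely $D_\infty(\mu_{1,Y}\otimes\cdots\otimes\mu_{T,Y}\,\|\,\mu_{1,Y'}\otimes\cdots\otimes\mu_{T,Y'}) \le \sum_i \epsilon_i$; translating this back into the probability form recovers the theorem. (The same argument extends to adaptive composition by conditioning on the previous outputs and applying the DP guarantee of each $\mathcal{A}_i$ at each conditional level, but the non-adaptive version stated here suffices for how the result is used in Proposition~\ref{prop: algstocDP}.)
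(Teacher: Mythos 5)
The paper does not prove this statement at all: it is quoted as a known result with a citation to Corollary 3.15 of Dwork and Roth, so there is no internal proof to compare against. Your argument is correct and is essentially the standard proof of basic composition found in that reference: reduce $\epsilon_i$-DP to an almost-everywhere likelihood-ratio bound (your reduction is sound, since taking $O=\{o: p_{i,Y}(o)>e^{\epsilon_i}p_{i,Y'}(o)\}$ and applying the set-level guarantee forces $\nu_i(O)=0$ when $\nu_i$ dominates both laws, e.g.\ $\nu_i=\mu_{i,Y}+\mu_{i,Y'}$), multiply across the independent coordinates, and integrate; the max-divergence formulation is just a repackaging of the same inequality. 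One caveat worth flagging: your parenthetical claim that the non-adaptive version suffices for Proposition \ref{prop: algstocDP} is not quite right, because in DP-SHB the query point $x_t$ and the Laplace parameter $b_t(x_t)$ depend on the previously released iterates, so the mechanisms being composed are adaptive. The fix is exactly the conditioning argument you sketch --- each iteration is $\epsilon/T$-DP conditionally on the past outputs, and the telescoping product of conditional likelihood ratios gives the same $e^{\sum_i\epsilon_i}$ bound --- so your proof covers the paper's use once that sketch is promoted from a remark to the actual argument.
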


\begin{thm} \label{thm: subsampling_last} \textup{(Theorem 9, \cite{balle2018privacy})} 
Let $\mathcal{M}:\bigcup_{i = 1}^{\infty} \mathcal{Y}^{n} \rightarrow \mathcal{X}$ be a $\epsilon$-differentially private algorithm and the elements of $\mathcal{Y}^{n}$ be in the form of $y_{1:n}$. Then, an algorithm $\mathcal{M}_{m, n} :\mathcal{Y}^{n} \rightarrow \mathcal{X}$ that first selects a random subsample of $m$ items from its input data $y_{1:n} \in \mathcal{Y}^{n}$, by sampling without replacement, and then runs $\mathcal{M}$ on the subsample is $\epsilon'$-differentially private, where $\epsilon' = \ln \left({1+\frac{m}{n}(e^{\epsilon}-1)}\right)$.
% \begin{align} \label{eq: subsampling_privacy_main}
% \epsilon' = \ln \left({1+\frac{m}{n}(e^{\epsilon}-1)}\right).
% \end{align}
\end{thm}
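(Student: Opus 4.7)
The plan is to prove the theorem by a coupling argument that exploits the symmetry of the subsampling distribution with respect to the two neighboring inputs, together with the $\epsilon$-differential privacy of the base mechanism $\mathcal{M}$. Since this is a classical privacy amplification theorem, I expect the main technical obstacle to be the passage from the trivial factor $e^{\epsilon}$ to the amplified factor $1+(m/n)(e^{\epsilon}-1)$.

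First I would fix two neighboring datasets $Y=y_{1:n}$ and $Y'=y'_{1:n}$ with $h(Y,Y')=1$. By relabeling I may assume that the single disagreeing coordinate is $i_{0}$. The subsampling step of $\mathcal{M}_{m,n}$ produces a uniformly random size-$m$ subset $S\subseteq\{1,\ldots,n\}$, a distribution that does not depend on the input data, so I would couple the two runs of $\mathcal{M}_{m,n}$ by drawing one common $S$ and feeding $Y_{S}$ to one invocation of $\mathcal{M}$ and $Y'_{S}$ to the other.

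Next, setting $q:=m/n=\Pr[i_{0}\in S]$, I would condition on whether $i_{0}\in S$. On the event $\{i_{0}\notin S\}$, the two subsamples $Y_{S}$ and $Y'_{S}$ coincide pointwise, so $\mathcal{M}(Y_{S})$ and $\mathcal{M}(Y'_{S})$ have identical distributions; call the common conditional probability of a fixed measurable target event $O\subseteq\mathcal{X}$ by $P_{0}$. On $\{i_{0}\in S\}$, the subsamples are neighbors, and the $\epsilon$-DP of $\mathcal{M}$ gives $P_{1}(Y)\leq e^{\epsilon}P_{1}(Y')$, where $P_{1}(\cdot)$ denotes the corresponding conditional probability of $O$. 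The law of total probability then yields
\[
\Pr[\mathcal{M}_{m,n}(Y)\in O]=(1-q)P_{0}+qP_{1}(Y),\quad \Pr[\mathcal{M}_{m,n}(Y')\in O]=(1-q)P_{0}+qP_{1}(Y').
\]

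The main obstacle is now the final step, since using $P_{1}(Y)\leq e^{\epsilon}P_{1}(Y')$ together with only $P_{0}\geq 0$ delivers no better than the unamplified factor $e^{\epsilon}$. To recover the amplified factor $1+q(e^{\epsilon}-1)$, I would invoke an auxiliary coupling relating the distribution behind $P_{0}$ to the one behind $P_{1}(Y')$: drawing a uniformly random $m$-subset $T\subseteq[n]\setminus\{i_{0}\}$ and then a uniformly random element $k\in T$, the two $m$-sample configurations $Y_{T}$ and $\{y'_{i_{0}}\}\cup Y_{T\setminus\{k\}}$ differ in exactly one coordinate, so applying the $\epsilon$-DP of $\mathcal{M}$ pointwise and then taking expectation over $(T,k)$ shows that the measures underlying $P_{0}$ and $P_{1}(Y')$ are themselves two-sidedly $e^{\epsilon}$-indistinguishable. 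Combining this additional constraint with the decomposition above via the mixture/coupling machinery of Balle, Barthe and Gaboardi (the "advanced joint convexity" argument), and optimizing the ratio $\Pr[\mathcal{M}_{m,n}(Y)\in O]/\Pr[\mathcal{M}_{m,n}(Y')\in O]$ over the free parameters $P_{0},P_{1}(Y),P_{1}(Y')$ subject to all the pairwise $e^{\epsilon}$ constraints and non-negativity, yields the bound $1+q(e^{\epsilon}-1)$, which is precisely $e^{\epsilon'}$, and hence establishes the $\epsilon'$-differential privacy of $\mathcal{M}_{m,n}$.
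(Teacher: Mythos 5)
The paper does not actually prove this statement; it is imported as a known result (Theorem 9 of Balle, Barthe and Gaboardi \cite{balle2018privacy}) and used as a black box, so your attempt has to be judged on its own terms. Your scaffolding is correct: the decomposition $\Pr[\mathcal{M}_{m,n}(Y)\in O]=(1-q)P_0+qP_1$ and $\Pr[\mathcal{M}_{m,n}(Y')\in O]=(1-q)P_0+qP_1'$ with $q=m/n$, the observation that $P_1\le e^{\epsilon}P_1'$ together with $P_0\ge 0$ only gives the unamplified factor, and the idea of an auxiliary coupling between the ``index in the sample'' and ``index out of the sample'' conditional output distributions are exactly the right ingredients; your check that $\{i_0\}\cup(T\setminus\{k\})$ is uniform over the $m$-subsets containing $i_0$ is also correct.

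There is, however, a genuine gap in the final step: you aim the auxiliary coupling at the wrong pair. You derive that $P_0$ and $P_1'=P_1(Y')$ are two-sidedly $e^{\epsilon}$-close, but the constraint that is actually needed is $P_1=P_1(Y)\le e^{\epsilon}P_0$. The constraint set you list, namely $e^{-\epsilon}P_1'\le P_1\le e^{\epsilon}P_1'$ and $e^{-\epsilon}P_1'\le P_0\le e^{\epsilon}P_1'$, does \emph{not} imply the claimed bound: the assignment $P_1=e^{\epsilon}P_1'$, $P_0=e^{-\epsilon}P_1'$ satisfies every listed inequality, yet it makes the ratio equal to $\left((1-q)+qe^{2\epsilon}\right)/\left((1-q)+qe^{\epsilon}\right)$, which strictly exceeds $1+q(e^{\epsilon}-1)$ whenever $0<q<1$ and $\epsilon>0$ (the excess is governed by $q(1-q)(e^{\epsilon}-1)^2$). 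So ``optimizing over all the pairwise constraints'' as you describe them yields a strictly weaker conclusion than the theorem. The repair uses only tools you already have: run the identical auxiliary coupling with $y_{i_0}$ in place of $y'_{i_0}$ to obtain $P_1\le e^{\epsilon}P_0$; then $P_1\le e^{\epsilon}\min(P_0,P_1')$, hence $P_1-P_1'\le (e^{\epsilon}-1)\min(P_0,P_1')\le (e^{\epsilon}-1)\left((1-q)P_0+qP_1'\right)$, and therefore $\Pr[\mathcal{M}_{m,n}(Y)\in O]=\Pr[\mathcal{M}_{m,n}(Y')\in O]+q(P_1-P_1')\le\left(1+q(e^{\epsilon}-1)\right)\Pr[\mathcal{M}_{m,n}(Y')\in O]$, which is exactly $e^{\epsilon'}$-closeness.
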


\section{Omitted Proofs} \label{sec: Omitted Proofs}

We reserve this section for the proofs of several results in the main text.
\subsection{Proof of Proposition \ref{prop: bound on the variance of subsampling}} \label{app:A}

\begin{proof}{(Proposition \ref{prop: bound on the variance of subsampling})}
Fix $x_{t} = x \in \RR^{d}$ for the rest of the proof. Since $\textup{Cov}(\eta_{t} | x) = 2 b_{t}(x)^{2} I_{d}$, where $b_{t}(x) = S_{1}(x)/m\epsilon_{0}$, Assumption \ref{asmp: Bounded L1 sensitivity} implies that
\begin{equation} \label{eq: bound on the variance of DP noise} 
\| \textup{Cov}(\eta_{t} | x) \| = \frac{S_{1}(x)^{2} d}{m^{2} \epsilon_{0}^{2}} \leq \frac{S_{1}^{2} d}{m^{2} \epsilon_{0}^{2}}.
\end{equation}
Next, let $R = \textup{Cov} \left( \nabla F_{B}(x) | x \right)$, where $\nabla F_{B}(x)$ is the subsampling-based estimator of $\nabla F(x)$ when the indices in the subsample $B$ are sampled without replacement. From the unbiasedness property of $\nabla F_{B}(x)$, we have $R = \textup{Cov}(v_{t} | x)$. The diagonal terms in $R = [r_{i, j}]$ can be written as
\begin{equation} \label{eq: covariance of single gradient - diagonal terms} 
r_{k, k} = \sigma_{y, k}^{2} \frac{1}{m} \frac{n-m}{n - 1}, \quad k = 1, \ldots, d,
\end{equation}
where $\sigma_{y, k}^{2}$ is the population variance given by
\begin{equation} \label{eq: population variance} \sigma_{y, k}^{2} =
\frac{1}{n} \sum_{i = 1}^{n} \left( \frac{\partial f(x; y_{i})}{\partial x_{k}} - \frac{1}{n} \sum_{j = 1}^{n} \frac{\partial f(x; y_{j})}{\partial x_{k}} \right)^{2}.
\end{equation}
Let
\[
S_{1, k}(x) = \sup_{y, y' \in \mathcal{Y}} \left\vert \frac{\partial f (x; y) }{\partial x_{k}} - \frac{\partial f(x; y') }{\partial x_{k}} \right\vert, \quad k = 1, \ldots, d.
\]
The population variance in \eqref{eq: population variance} can then be bounded as $\sigma_{y, k}^{2} \leq S_{1, k}(x)^{2}/4$. Therefore, we bound $\| R \|$ by its trace as
\begin{align} 
 \| R \| &\leq \frac{1}{4} \left[ \sum_{k = 1}^{d} S_{1, k}(x)^{2}  \right] \frac{1}{m} \frac{n-m}{n  - 1} \nonumber \\
& \leq \frac{1}{4} \left[ \sum_{k = 1}^{d} S_{1, k}(x)  \right]^{2} \frac{1}{m} \frac{n-m}{n  - 1} \leq \frac{1}{4} S_{1}^{2} \frac{1}{m} \frac{n-m}{n  - 1}, \label{eq: bound covariance of subsampling}
\end{align}
where the last line is by Assumption \ref{asmp: Bounded L1 sensitivity}. Combining \eqref{eq: bound on the variance of DP noise} and \eqref{eq: bound covariance of subsampling} and using the triangle inequality for the matrix norm, we have the claimed bound on $|| \textup{Cov}(w_{t} | x) ||$.
\end{proof}

\subsection{Proof of Proposition \ref{prop: bd_HB}}
\label{app:Proof_2ndWay}
Recall, from Section \ref{sec: Error analysis of DP-SHB}, the dynamic system representation in \eqref{eq: DP-SHB dynamic system representation} and \eqref{eq: DP-SHB system matrices}, and define $\bar{F}: \mathbb{R}^{2d} \mapsto \mathbb{R}$ such that for $\xi_{t} = \begin{bmatrix} x_{t}^{\top} & x_{t-1}^{\top} \end{bmatrix}^{\top}$ we have $\bar{F}(\xi_{t}) = F(x_{t})$. Also, with  $X_{1}$, $X_{2}$ defined in Proposition \ref{prop: bd_HB}, we define $\tilde{X}_{1} = X_{1} \otimes I_{d}$ and $\tilde{X}_{2} = X_{2} \otimes I_{d}$. 

The following lemma is central to the proof of Proposition \ref{prop: bd_HB}.

\begin{lem} \label{lem: app_lemma}
Let $F \in \mathcal{S}_{\mu,L}(\mathbb{R}^{d})$ and consider the DP-SHB algorithm. Let $w_{t} = \eta_{t} + v_{t}$, the overall noise added to $\nabla F(x_{t})$ due to the Laplace mechanism and subsampling. Then for any $\rho \in (0,1)$, we have
\begin{align*}
& \mathbb{E}\left[ \begin{bmatrix} \xi_{t}-\xi^{\ast} \\ \nabla F(z_{t}) \end{bmatrix}^{\top} (\tilde{X}_{1} + (1 - \rho^{2}) \tilde{X}_{2}) \begin{bmatrix} \xi_{t}-\xi^{\ast} \\ \nabla F(z_{t}) \end{bmatrix} \right] \leq \rho^{2} \mathbb{E}[\bar{F}(\xi_{t}) - F^{\ast}] - \mathbb{E}[\bar{F}(\xi_{t+1}) - F^{\ast}] + \frac{L \alpha^{2}}{2} \mathbb{E}[\norm{w_{t}}^{2}]
\end{align*}
\end{lem}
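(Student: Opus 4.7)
The plan is to recognize that the lemma is the standard ``dissipation'' inequality for momentum methods in the framework of \cite{hu2017dissipativity,lessard2016analysis}, where $\tilde{X}_1$ encodes $L$-smoothness bundled with the HB update and $\tilde{X}_2$ encodes strong convexity. I will bound each quadratic form separately, combine them with weights $1$ and $(1-\rho^2)$, and then take the outer expectation. Throughout I use that for HB we have $C = [1,0]$, hence $z_t = x_t$ and $\nabla F(z_t) = \nabla F(x_t)$, and that the noise satisfies $\mathbb{E}[w_t \mid \xi_t] = 0$ with $\mathbb{E}[\|w_t\|^2 \mid \xi_t] < \infty$.

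\textbf{Step 1 (Explicit expansion of the two quadratic forms).} Writing $a = x_t - x^*$, $b = x_{t-1} - x^*$, $g = \nabla F(x_t)$, one directly computes
\[
[a;b;g]^{\top} \tilde{X}_2 [a;b;g] = \tfrac{\mu}{2}\|a\|^2 - a^{\top} g,
\]
and
\[
[a;b;g]^{\top} \tilde{X}_1 [a;b;g] = -\tfrac{L\beta^2}{2}\|a-b\|^2 + \tfrac{\alpha(2 - L\alpha)}{2}\|g\|^2 - (1-L\alpha)\beta\,(a-b)^{\top} g,
\]
observing that the $a,b$ cross term in $X_1$ combines with the diagonal $a,a$ and $b,b$ entries to produce $\|a-b\|^2 = \|x_t - x_{t-1}\|^2$.

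\textbf{Step 2 (Strong-convexity bound for $\tilde{X}_2$).} Applying the strong-convexity inequality of Definition~\ref{defn: st_convex} to the pair $(x^*, x_t)$ gives
\[
(x_t - x^*)^{\top} \nabla F(x_t) \;\geq\; F(x_t) - F^* + \tfrac{\mu}{2}\|x_t - x^*\|^2,
\]
so the $\tilde{X}_2$-form is bounded above by $-(F(x_t) - F^*) = -(\bar{F}(\xi_t) - F^*)$.

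\textbf{Step 3 (Smoothness + HB update for $\tilde{X}_1$).} From $L$-smoothness,
\[
F(x_{t+1}) \leq F(x_t) + \nabla F(x_t)^{\top}(x_{t+1} - x_t) + \tfrac{L}{2}\|x_{t+1} - x_t\|^2.
\]
Substitute the DP-SHB update $x_{t+1} - x_t = \beta(x_t - x_{t-1}) - \alpha g - \alpha w_t$, take conditional expectation given $\xi_t$, and use $\mathbb{E}[w_t \mid \xi_t] = 0$ to eliminate the cross terms $g^{\top} w_t$ and $(x_t - x_{t-1})^{\top} w_t$. Collecting terms produces exactly
\[
\mathbb{E}[F(x_{t+1}) - F(x_t)\mid \xi_t] \leq (1-L\alpha)\beta\,(a-b)^{\top} g - \tfrac{\alpha(2-L\alpha)}{2}\|g\|^2 + \tfrac{L\beta^2}{2}\|a-b\|^2 + \tfrac{L\alpha^2}{2}\mathbb{E}[\|w_t\|^2\mid \xi_t],
\]
which, compared with Step 1, is precisely $-\,[a;b;g]^{\top} \tilde{X}_1 [a;b;g] + \tfrac{L\alpha^2}{2}\mathbb{E}[\|w_t\|^2\mid \xi_t]$. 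Rearranging yields
\[
[a;b;g]^{\top} \tilde{X}_1 [a;b;g] \leq (\bar{F}(\xi_t) - F^*) - \mathbb{E}[\bar{F}(\xi_{t+1}) - F^* \mid \xi_t] + \tfrac{L\alpha^2}{2}\mathbb{E}[\|w_t\|^2 \mid \xi_t].
\]

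\textbf{Step 4 (Combine and take outer expectation).} Multiplying the Step 2 bound by $(1 - \rho^2)$ and adding to the Step 3 bound, the coefficient of $\bar{F}(\xi_t) - F^*$ becomes $1 - (1-\rho^2) = \rho^2$, giving the conditional version of the claim. Taking outer expectations yields the stated inequality.

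\textbf{Main obstacle.} The only nontrivial step is the coefficient-matching in Step 3: one must verify that, after using $\mathbb{E}[w_t \mid \xi_t] = 0$ in expanding $\|x_{t+1}-x_t\|^2$, the coefficients appearing in front of $\|x_t - x_{t-1}\|^2$, $\|\nabla F(x_t)\|^2$, and the inner product $(x_t - x_{t-1})^{\top}\nabla F(x_t)$ line up exactly with the entries of the prescribed $X_1$ (up to the sign and the telescoping $(\bar F(\xi_t) - F^*) - \mathbb{E}[\bar F(\xi_{t+1}) - F^*\mid \xi_t]$ offset). This is precisely the design rationale of $X_1$ in the dissipativity framework applied to HB, and the cross term $L\alpha\beta\,(x_t - x_{t-1})^{\top}\nabla F(x_t)$ from expanding $\|x_{t+1}-x_t\|^2$ combines with the $\nabla F(x_t)^{\top}(x_{t+1}-x_t)$ term to produce the coefficient $(1-L\alpha)\beta$ that appears in $X_1$.
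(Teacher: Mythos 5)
Your proposal is correct and follows essentially the same route as the paper's proof: the $L$-smoothness inequality combined with the DP-SHB update yields the $\tilde{X}_1$ quadratic form plus the $\tfrac{L\alpha^2}{2}\mathbb{E}[\|w_t\|^2]$ term, strong convexity at $(x^*,x_t)$ yields the $\tilde{X}_2$ form, and the two are combined with weights $1$ and $(1-\rho^2)$ before taking expectations with the zero-mean noise killing the cross terms. The only (harmless) difference is presentational: you expand the quadratic forms componentwise and condition on $\xi_t$ explicitly, whereas the paper packages the same computation via the matrix $D$ and selection matrices before taking the unconditional expectation.
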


\begin{proof}
One update rule of DP-SBH can be rewritten as:
\begin{equation} \label{eq: update of DP-SBH - rewritten}
x_{t+1} = (1 + \beta) x_{t}- \beta x_{t-1} - \alpha( \nabla F(x_{t}) + w_{t}),
\end{equation}
Using \eqref{eq: update of DP-SBH - rewritten}, we have
\begin{align}
x_{t}-x_{t+1} &= x_{t}-(1+\beta)x_{t}+\beta x_{t-1} + \alpha (\nabla F(x_t)+w_t) \nonumber \\
&= \beta(-x_{t} + x_{t-1}) + \alpha (\nabla F(x_t)+w_t)  \label{eq: xt minus xt+1}
\end{align}
Since $F \in \mathcal{S}_{\mu,L}(\mathbb{R}^{d})$, from Definition \ref{defn: st_convex}, using the inequality on the $L$-smoothness of $F$, we can write
\begin{equation} \label{eq: Lipschitz continuity for f} 
F(x_{t}) - F(x_{t+1}) \geq \nabla F(x_{t})^\top (x_{t}-x_{t+1}) - \frac{L}{2}\norm{x_{t+1}-x_{t}}^2.
\end{equation}
Combining \eqref{eq: Lipschitz continuity for f} with \eqref{eq: xt minus xt+1}, we obtain
\begin{align*}
F(x_{t}) - F(x_{t+1})  \geq \, & \nabla F(x_{t})^{\top}(-\beta (x_{t} - x_{t-1}) + \alpha (\nabla F(x_t)+w_t)) \\
& -\frac{L}{2}\norm{\beta(x_{t} - x_{t-1}) - \alpha (\nabla F(x_{t}) + w_{t})}^2 \\ 
=& -\beta (x_t-x_{t-1})^{\top} \nabla F(x_t) + \alpha \norm{\nabla F(x_t)}^2 + \alpha \nabla F(x_t)^{\top} w_t \\
& - \frac{L \beta^{2}}{2} \norm{x_t-x_{t-1}}^2 + L\alpha \beta (x_t-x_{t-1})^{\top} (\nabla F(x_t)+w_t)  - \frac{\alpha^2 L}{2}\norm{\nabla F(x_t)+w_t}^2 \\
=& \frac{1}{2} \begin{bmatrix} x_{t} -x_{t-1} \\ \nabla F(x_t) \end{bmatrix}^{\top} \tilde{D} \begin{bmatrix} x_t-x_{t-1} \\ \nabla F(x_t) \end{bmatrix} - \frac{L\alpha^{2}}{2} \norm{w_t}^2 + L \alpha [ \beta (x_t - x_{t-1}) \\ 
& - \alpha \nabla F(x_t)]^{\top} w_t  + \alpha \nabla F(x_t)^{\top} w_t,
\end{align*}
 where $\tilde{D} = D \otimes I_{d}$ is a $2 d \times 2 d$ matrix defined through $D = \begin{bmatrix} - L \beta^{2} & L\alpha \beta - \beta \\ L\alpha \beta - \beta & -\alpha^{2} L + 2 \alpha \end{bmatrix}$. Next, note that 
\[
\begin{bmatrix} x_{t} - x_{t-1} \\ \nabla F(h_t) \end{bmatrix} = \begin{bmatrix} I_{d} & -I_{d} & 0_{d} \\ 0_{d} & 0_{d} & I_{d} \end{bmatrix} \begin{bmatrix} x_{t} - x^{\ast} \\ x_{t-1} - x^{\ast} \\ \nabla F(x_t) \end{bmatrix}
\]
and
\[
\frac{1}{2} \begin{bmatrix} I_{d} & -I_{d} & 0_{d} \\ 0_{d} & 0_{d} & I_{d} \end{bmatrix}^{\top} \tilde{D} \begin{bmatrix} I_{d} & -I_{d} & 0_{d} \\ 0_{d} & 0_{d} & I_{d} \end{bmatrix} = \tilde{X}_{1}.
\]
Thus,
\begin{equation} \label{eq: X1}
\begin{split}
& F(x_{t}) - F(x_{t+1}) \geq\begin{bmatrix} x_t-x^{\ast} \\ x_{t-1}-x^{\ast} \\ \nabla F(x_{t}) \end{bmatrix}^{\top} \tilde{X}_{1} \begin{bmatrix} x_t-x^{\ast} \\  x_{t-1} - x^{\ast} \\ \nabla F(x_t) \end{bmatrix} \\
& \quad\quad\quad\quad -\frac{L \alpha^{2}}{2} \norm{w_{t}}^{2} + L \alpha \left[ \beta (x_t - x_{t-1}) - \alpha \nabla F(x_{t}) \right]^{\top} w_{t} - \alpha \nabla F(x_{t})^{\top} w_{t}
\end{split}
\end{equation} 
Similarly, by the inequality that gives strong convexity in Definition \ref{defn: st_convex}, we have
\begin{align*}
F(x^{\ast}) - F(x_t) &\geq \nabla F(x_t)^{\top} (x^{\ast} - x_t) + \frac{\mu}{2} \norm{x^{\ast} - x_{t}}^2 \\
&=  \frac{1}{2} \begin{bmatrix} x_t - x^{\ast} \\ x_{t-1} - x^{\ast} \\ \nabla F(x_t) \end{bmatrix}^{\top}
\begin{bmatrix} \mu I_{d} & 0_{d} & -I_{d} \\ 0_{d} & 0_{d} & 0_{d} \\ -I_{d} & 0_{d} & 0_{d} \end{bmatrix} 
\begin{bmatrix} x_{t} - x^{\ast} \\ x_{t-1} - x^{\ast} \\ \nabla F(x_t) \end{bmatrix} 
\end{align*}
The matrix in the middle is equal to $\tilde{X}_{2}$, so we can write
\begin{equation} \label{eq: X2} F(x^{\ast}) - F(x_{t}) \geq \frac{1}{2} \begin{bmatrix} x_{t} - x^{\ast} \\ x_{t-1} - x^{\ast} \\ \nabla F(x_{t}) \end{bmatrix}^{\top} \tilde{X}_{2}
\begin{bmatrix} x_{t} - x^{\ast} \\ x_{t-1} - x^{\ast} \\ \nabla F(x_{t}) \end{bmatrix}.
\end{equation}
Multiplying \eqref{eq: X2} by $(1-\rho^2)$ and adding to \eqref{eq: X1}, we obtain,
\begin{align*}
& \begin{bmatrix} \xi_{t}-\xi^* \\ \nabla F(x_{t}) \end{bmatrix}^{\top}  [\tilde{X}_{1} + (1 - \rho^{2}) \tilde{X}_{2}]  \begin{bmatrix} \xi_{t}-\xi^* \\ \nabla F(x_{t}) \end{bmatrix} 
\leq  \rho^2 [F(x_{t}) - F^{\ast}] - (F(x_{t+1}) - F^{\ast}) \\
&\quad\quad\quad\quad\quad\quad + \frac{L \alpha^{2}}{2} \norm{w_{t}}^{2} - L \alpha \left[ \beta (x_t - x_{t-1}) - \alpha \nabla F(x_{t}) \right] w_{t} - \alpha \nabla F(x_{t}) w_{t}.
\end{align*}
Taking the expectation, and applying $\mathbb{E}(w_{t})= 0$, we have the desired result.
\end{proof}

\begin{proof}{(Proposition \ref{prop: bd_HB})}
Lemma \ref{lem: app_lemma} is the counterpart Lemma 4.5 of \cite{aybat2018robust}, which is given for NAG. Hence, Lemma \ref{lem: app_lemma} allows us to extend the NAG results in \cite[Proposition 4.6 and Corollary 4.7]{aybat2018robust} for the DP-SHB. Finally, under Assumption \ref{asmp: Bounded L1 sensitivity}, we get the desired bound in our proposition.
\end{proof}

\subsection{Proof of Theorem \ref{thm: opt_quadraticHB}}
\label{app:C}

\begin{proof} Following the proof technique of \cite[Thm 12]{can2019accelerated}, we can write 
\begin{equation}
\begin{bmatrix} x_{t}-x^{\ast} \\ x_{t-1}-x^{\ast} \end{bmatrix}
=
M(\alpha, \beta) \begin{bmatrix} x_{t-1}-x^{\ast} \\ x_{t-2}-x^{\ast} \end{bmatrix} + \begin{bmatrix} - \alpha w_{t} \\ 0_{d} \end{bmatrix},
\label{eq-iter-quad-shb}
\end{equation}
where we have
\[
M(\alpha,\beta) = \begin{bmatrix} (1+\beta)I_{d}-\alpha Q & -\beta I_{d} \\ I_{d} & 0_{d} \end{bmatrix}.
\]
There also exists a permutation matrix $P$ such that
\begin{equation*} 
P M(\alpha,\beta) P^{\top} = \bar{T} := \begin{bmatrix} T_{1} & \cdots & 0 & 0 \\ 0 & T_{2} & \cdots & 0 \\ \vdots & \cdots & \ddots & \vdots  \\  0 & 0 & \cdots & T_{d} \end{bmatrix},
\end{equation*}
where $T_{i} = \begin{bmatrix} 1 + \beta - \alpha\lambda_{i} & -\beta \\ 1 & 0 \end{bmatrix}$, $1\leq i \leq d$, are $2\times 2$ matrices with eigenvalues
\begin{equation*}
a_{\lambda_i, \pm} = \frac{1+\beta-\alpha\lambda_{i}\pm\sqrt{(1+\beta-\alpha\lambda_{i})^{2}-4\beta}}{2}.
\end{equation*}
Therefore, for $t \geq 1$ we obtain
%\comment{Asagidaki denklemlerin bazilarinda ussu $t$ ile iterasyon belirtmis. Neden gerekli tam anlamadim?}
\begin{align*}
\left\Vert M(\alpha,\beta)^{t}\right\Vert =\left\Vert P^{\top} \bar{T}^{t} P \right\Vert &\leq \|P^{\top} \| \|P\| \max_{1\leq i\leq d}\left\Vert T_{i}^{t}\right\Vert = \max_{1\leq i\leq d}\left\Vert T_{i}^{t}\right\Vert,
\end{align*}
where we used the fact that $\|P\| = 1$ for a permutation matrix $P$. $T_{i}^{t}$ is a $2\times 2$ matrix, it has either semi-simple eigenvalues or a defective eigenvalue with a multiplicity two. In either case, it is well known that we can write $\|T_{i}^{t}\| \leq C_{i}^{t} \rho_{\lambda_{i}}^{t}$
%\comment{$\rho^t_{\lambda_i}$?} 
where $\rho_{\lambda_i} = \max \{ |a_{\lambda_i,+} |, \|a_{\lambda_i,-}\| \}$ is the spectral radius of $T_{i}^{t}$ and $C_{i}^{t} = \mathcal{O}(t)$. Then it follows that $\|M(\alpha,\beta)^{t}\| \leq C_{t} \rho^{t}$, where we take $ C_{t} = \max_{i} \{C_{i}^{t}\}$ and $\rho = \max_i\{\rho_{\lambda_i}\}$.
After a straightforward computation, we observe that $\rho_{\lambda}$ is a quasi-convex function of $\lambda$, therefore the function $\rho_\lambda$ attains its maximum as a function of $\lambda$ on the interval $[\mu, L]$ for either $\lambda=\mu$ or $\lambda=L$. Therefore, $\rho$ can also be written as 
\[
\rho = \max \{ \rho_{\lambda_\mu}, \rho_{\lambda_L} \} =  \max \{ | a_{\mu,+}|, |a_{\mu,-}|, |a_{L,+}|, |a_{L,-} | \}.
\]
Let
%\comment{Buradan asagiya $\leq$ ve (tanimlamadigimiz) $\preceq$ isaretlerini karisik kullanmisiz. Component-wise diyerek her yerde $\leq$ kullanabiliriz sanirim.}
$\hat{E}_{t} = \mathbb{E}\left[(\xi_{t}-\xi_{\ast})(\xi_{t}-\xi_{\ast})^{\top} | \xi_{t-1}\right]$. From \eqref{eq-iter-quad-shb}, we obtain the recursion
\begin{eqnarray*} 
\hat{E}_{t+1} &=& M(\alpha,\beta)\left[(\xi_{t}-\xi_{\ast})(\xi_{t}-\xi_{\ast})^{\top} \right]  M^{\top}(\alpha,\beta) + \left(\begin{array}{cc} \alpha^{2}\mathbb{E}(w_t w_t^{\top} | x_t) & 0_{d} \\ 0_{d} & 0_{d} \end{array}\right) \\
&\preceq& M(\alpha,\beta)\left[(\xi_{t}-\xi_{\ast})(\xi_{t}-\xi_{\ast})^{\top} \right]  M^{\top}(\alpha,\beta) + \left(\begin{array}{cc} \alpha^{2}\Sigma & 0_{d} \\ 0_{d} & 0_{d} \end{array}\right)
\end{eqnarray*}
Taking expectations with respect to $\xi_t$, we find
\begin{eqnarray*} \bar{E}_{t+1} 
&\preceq& M(\alpha,\beta)\bar{E}_t  M^{\top}(\alpha,\beta) + \left(\begin{array}{cc} \alpha^{2}\sigma_{T}^{2} I & 0_{d} \\ 0_{d} & 0_{d} \end{array}\right),
\end{eqnarray*}
where we let $\bar{E}_t := \mathbb{E}\left[(\xi_{t}-\xi_{\ast})(\xi_{t}-\xi_{\ast})^{\top} \right]$. We can also write
\begin{align*}
  \text{Tr}\left(\bar{E}_{t} \right) &\leq m(\alpha,\beta) +\left({M(\alpha,\beta)}\right)^{t} \bar{E}_{0} \left({M(\alpha,\beta)}^{\top}\right)^{t} \\
&- \sum_{j=t}^{\infty} M(\alpha,\beta)^{j}\left(\begin{array}{cc} \alpha^{2}c_W I & 0_{d} \\ 0_{d} & 0_{d} \end{array}\right)\left({M(\alpha,\beta)}^{\top} \right)^{j} \\
&\leq m(\alpha,\beta) +\left\Vert {M(\alpha,\beta)}^{t}\right\Vert^{2} \bar{E}_{0} +\sum_{j=t}^{\infty}\left\Vert {M(\alpha,\beta)}^{j}\right\Vert^{2}\alpha^{2}\Vert\Sigma\Vert \\
&\leq m(\alpha,\beta) + C_{t}^{2} \rho^{2t} \bar{E}_{0} +\alpha^{2} \sigma_{T}^{2}  C_{t}^{2}\frac{\rho^{2t}}{1-\rho^{2}},
\end{align*}
where we used the estimate
$\Vert {M(\alpha, \beta)}^{t}\Vert\leq C_{t} \rho^{t}$. This completes the proof.
\end{proof}

\subsection{Proof of Proposition \ref{prop: optimal sigma}} \label{sec: Proof of optimal b}

\begin{proof}
Observe from \eqref{eq: epsilon under subsampling} that, for $m = n$, we have $\varepsilon(S_{1}, b_{t}, n, n) = S_{1}/(b_{t} n)$. Hence, the optimization problem in \eqref{eq: optimization problem for NAG}  reduces to minimizing $\sum_{t = 1}^{T} a_{T, t} b_{t}^{2}$ over $b_{1}, \ldots b_{T}$ subject to $\sum_{t = 1}^{T} \frac{S_{1}}{n b_{t}} = \epsilon$.
% \begin{equation*}
% \min_{b_{1}, \ldots b_{T}} \quad \sum_{t = 1}^{T} a_{T, t} b_{t}^{2}, \quad \text{ subject to } \sum_{t = 1}^{T} \frac{S_{1}}{n b_{t}} = \epsilon.
% \end{equation*}
The above optimization problem can be solved by equating the derivative of the corresponding Lagrangian function 
\[
\sum_{t = 1}^{T} a_{T, t} b_{t}^{2} + \lambda \left(\sum_{t = 1}^{T} S_{1}/(n b_{t}) - \epsilon \right)
\]
with respect to $b_{1}, \ldots, b_{T}$, and $\lambda$ to $0$, which yields the system of $T + 1$ equations $2 a_{T, t} b_{t} = \frac{\lambda S_{1}}{n b_{t}^{2}}$ for $t = 1, \ldots, T$ and $\sum_{t = 1}^{T} \frac{S_{1}}{n b_{t}} = \epsilon$, solved at
\[
b_{t} = \left( \frac{\lambda S_{1}/n}{2 a_{T, t}} \right)^{1/3}, \quad \text{with} \quad \lambda = \frac{(S_{1}/n)^{2} \left[ \sum_{t = 1}^{T} (2 a_{T, t})^{1/3}\right]^{3}}{\epsilon^{3}} .
\]
Substituting $\lambda$ into $b_{t}$ yields the claimed solution. Finally, the bordered Hessian at the solution is a diagonal matrix, with $T$ negative values and a single $0$ on its diagonal.

\end{proof}
\section*{Acknowledgment}
Nurdan Kuru acknowledges support from TÜBİTAK 2214A Scholarship.

{\small 
\bibliographystyle{siamplain}
\bibliography{references}
}
\end{document}